\theoremstyle{plain}
\theoremstyle{remark}
\theoremstyle{lemma}
\newcommand{\E}{\mathbb{E}}
\definecolor{darkgreenClj}{rgb}{0.25,.5,0.25}
\definecolor{blueClj}{rgb}{0,0.33,0.66}
\definecolor{redClj}{rgb}{0.66,0.0,0.0}
\definecolor{purpleClj}{rgb}{0.33,0,0.66}
\definecolor{cyanClj}{rgb}{0.0,0.5,0.5}
\definecolor{orangeClj}{rgb}{0.75,0.35,0.0}
\definecolor{grayClj}{rgb}{0.4,0.4,0.4}
\em \color{gray}, 
\ttfamily\color{blueClj},
\ttfamily\color{redClj},
\ttfamily\color{cyanClj},
\ttfamily\color{purpleClj},
\ttfamily\color{orangeClj},
\ttfamily\color{darkgreenClj},
\newtheorem{corollary}{Corollary}
\algnewcommand{\IIf}[1]{\State\algorithmicif\ #1\ \algorithmicthen}
\algnewcommand{\ElseIIf}{\unskip\ \algorithmicelse}
\algnewcommand{\EndIIf}{\unskip\ \algorithmicend\ \algorithmicif}
\titlespacing\section{0pt}{4pt plus 2pt minus 2pt}{0pt plus 2pt minus 0pt}
\titlespacing\subsection{0pt}{4pt plus 2pt minus 2pt}{0pt plus 2pt minus 0pt}
\titlespacing\subsubsection{0pt}{4pt plus 2pt minus 2pt}{0pt plus 2pt minus 0pt}
\title{Nesting Probabilistic Programs}
\author{
	Tom Rainforth \\
	Department of Statistics\\
	University of Oxford\\
	\texttt{\small rainforth@stats.ox.ac.uk} \\
}
\begin{document} 
	
%	\vspace{-20pt}
	
\maketitle	
	
\setlength{\abovedisplayskip}{2.5pt}
\setlength{\belowdisplayskip}{2.5pt}
\setlength{\abovedisplayshortskip}{2.5pt}
\setlength{\belowdisplayshortskip}{2.5pt}

%\vspace{-20pt}

\begin{abstract}
\vspace{-10pt}
We formalize the notion of nesting probabilistic programming queries
 and investigate the resulting statistical implications.  We demonstrate
that while query nesting allows the definition of models which could not 
otherwise be expressed, such as those involving agents reasoning about other agents,
existing systems take approaches which lead to
inconsistent estimates.
%changes are required to the approaches employed by existing systems to ensure consistent estimation.  
We show how to correct this by delineating possible ways one might want to nest 
queries and asserting the respective conditions required for convergence.
%To aid in the development of inference engines that satisfy these conditioning,
We further introduce a new \emph{online} nested Monte Carlo estimator that 
makes it substantially easier to ensure these conditions are met, thereby
providing a simple framework for designing statistically correct inference
engines.  We prove the correctness of this online estimator and show that, when using the
recommended setup, its asymptotic variance is always better than that of the
equivalent fixed estimator, while its bias is always within a factor of two.
%Our results both reaffirm
%the need for query nesting and highlight potential pitfalls
%associated with doing so.
\end{abstract}

\vspace{-7pt}

% !TEX root =  main.tex

\section{INTRODUCTION}
\label{sec:intro}

\vspace{0pt}

Probabilistic programming systems (PPSs) allow probabilistic models to be represented in the 
form of a generative model and statements for conditioning on data~\citep{goodman2008church,gordon2014probabilistic}.
Informally, one can think of the generative model as the definition of a prior, the conditioning 
statements as the definition of a likelihood, and the output of the program as samples from a posterior distribution. 
Their core philosophy is to decouple model specification and inference, the former 
corresponding to the user-specified program code and the latter to an inference engine 
capable of operating on arbitrary programs.  Removing the need for users to write 
inference algorithms significantly reduces the burden of developing new models and 
makes effective statistical methods accessible to non-experts.  

Some, so-called universal, systems~\citep{goodman2008church,goodman_book_2014,mansinghka2014venture,wood2014new} 
further allow the definition of models that would be hard, or
even impossible, to convey using conventional frameworks such as graphical models.
One enticing manner they do this is by allowing
arbitrary nesting of models, known in the probabilistic programming literature as queries~\citep{goodman2008church},
such that it is easy to define and run problems that fall outside the standard inference
framework~\citep{goodman2008church,mantadelis2011nesting,stuhlmuller2014reasoning,
le2016nested}. 
This allows the definition of models that could not be encoded without nesting, such as experimental 
design problems~\citep{ouyang2016practical} and various models for theory-of-mind~\citep{stuhlmuller2014reasoning}.
 In particular, models that involve 
agents reasoning about other agents require, in general,
some form of nesting.
For example, one might use such nesting to model a poker player reasoning 
about another player as shown in Section~\ref{sec:mot}.
%Here the nesting is essential
%because each player has access to information the other player does not and must
%carry out their own inference to try and unearth this information.
%while~\cite{stuhlmuller2014reasoning} provide a number of other examples in
%the theory-of-mind domain.  
As machine learning increasingly starts to try and tackle problem
domains that require interaction with humans or other external systems, such as
the need for self-driving cars to account for the behavior of pedestrians, we believe
that such nested problems are likely to become increasingly common
and that PPSs will form a powerful tool for encoding them.

However, previous work %in the probabilistic-programming literature 
has, in general, 
implicitly, and incorrectly, assumed that the convergence results from standard inference
schemes carry over directly to the nested setting.  In truth,
inference for nested queries falls outside the
scope of conventional proofs and so additional
work is required to prove the consistency of PPS inference engines for nested queries.
Such problems constitute special cases of \textbf{\emph{nested estimation}}.
In particular, the use of Monte Carlo (\mc) methods by most PPSs mean they form
 particular instances 
of \textbf{\emph{nested Monte Carlo}} (NMC) estimation~\citep{hong2009estimating}.
Recent work~\citep{rainforth2016pitfalls,rainforth2017pitfalls,fort2016mcmc}
has demonstrated that NMC is consistent for a general class of models,
%In particular,~\citet{rainforth2017pitfalls} 
%provide results for cases of
%multiple levels of nesting as can occur for PPSs.  
but also that it entails a convergence rate in the total computational cost
which decreases exponentially
with the depth of the nesting.  Furthermore,  additional assumptions are 
required to achieve this convergence, most noticeably that, except in a few special cases,
one needs to drive not only the total number of samples used to infinity, but also
the number of samples used at each layer of the estimator,
%.  In other words, if an outer estimator
%uses $N_0$ samples of an inner estimator, each of which themselves use $N_1$ samples (giving $T=N_0 N_1$ total samples), then it is
%necessary for both $N_0\rightarrow\infty$ and $N_1\rightarrow\infty$ to achieve convergence, 
a requirement generally flaunted by existing PPSs.

The aim of this work is to formalize the notion of query nesting
and use these recent NMC results
 to investigate the statistical correctness of the resulting procedures
 carried out by PPS inference engines.  To do this, 
we postulate that there are three distinct ways one might nest one query within another:
sampling from the conditional distribution of another query (which
we refer to as \textbf{\emph{nested inference}}),
factoring the
trace probability of one query with the partition function estimate of another
(which we refer to as \textbf{\emph{nested conditioning}}),
and using expectation estimates calculated using one query as 
\textbf{\emph{first class variables}} in another.  
We use the aforementioned NMC results
to assess the relative correctness of each of these categories of nesting.
%We further outline some special cases where standard \mc convergence can be achieved.
In the interest of exposition, we will mostly focus on the PPS
\emph{Anglican}~\citep{tolpin2016design,wood2014new}
(and also occasionally Church~\citep{goodman2008church})
as a basis for our discussion,
but note that our results apply more generally.  For example, our
nested inference case covers the problem of sampling from
cut distributions in OpenBugs~\citep{plummer2015cuts}.

We find that nested inference is statistically challenging and incorrectly 
handled by existing
systems, while nested conditioning is 
% Previous considerations of nested queries in the literature
%predominantly fall into this category.
%Nested conditioning, on
%the other hand, turns out to be 
statistically straightforward and done correctly.
Using estimates as variables turns out
to be exactly equivalent to generic NMC estimation and must thus be dealt with on
a case-by-case basis.
Consequently, we will focus more on nested inference than the other cases.

To assist in the development of consistent approaches, we further
introduce a new \emph{online} NMC (ONMC) scheme that
obviates the need to revisit previous samples when refining estimates, thereby
 simplifying the
process of writing consistent online nested estimation schemes,
as required by most PPSs.
 We show that ONMC's convergence rate only varies by a small
 constant factor relative to conventional NMC: given
some weak assumptions and the use of recommended parameter settings,
its asymptotic variance is always
better than the equivalent NMC estimator with matched total sample budget, 
while its asymptotic bias is
always within a factor of two.

%We find that changes are required to how Anglican tackles nested query problems to
%ensure consistency and allow exploitation of the special cases.  
%Empirical evidence supporting our theoretical
%discussions are provided in Appendices~\ref{sec:emp} and~\ref{sec:exp}.
\section{BACKGROUND}
% !TEX root =  main.tex

\subsection{NESTED MONTE CARLO}
\label{sec:nmc}

We start by providing a brief introduction to NMC, using similar notation to
that of~\cite{rainforth2017pitfalls}.
Conventional \mc estimation approximates an intractable expectation $\gamma_0$ of a 
function $\lambda$ using
\begin{align}
\label{eq:MC}
\gamma_0 &= \E\left[\lambda(y^{(0)})\right]
\approx I_0 = \frac{1}{N_0} \sum_{n=1}^{N_0} \lambda(y_n^{(0)})
\end{align}
where $y_n^{(0)} \iid p(y^{(0)})$, resulting in a mean squared error (MSE) that decreases at a rate $O(1/N_0)$.
 %as per the weak law of large numbers (see e.g.~\cite{owen2013mc}).  
For nested estimation
problems, $\lambda(y^{(0)})$ is itself intractable, corresponding to a
nonlinear mapping of a (nested) estimation.  Thus in the single nesting case,
$\lambda(y^{(0)}) = f_0\left(y^{(0)},
\E \left[f_1\left(y^{(0)},y^{(1)}\right)\middle|y^{(0)}\right]\right)$ giving
\begin{align*}
\gamma_0 &= \E \left[f_0\left(y^{(0)},\E \left[f_1\left(y^{(0)},y^{(1)}\right)\middle|y^{(0)}\right] \right)\right] \\
&\approx I_0 = \frac{1}{N_0} \sum_{n=1}^{N_0} f_0\left(y_n^{(0)}, \frac{1}{N_1} \sum_{m=1}^{N_1} 
f_1\left(y_n^{(0)},y_{n,m}^{(1)}\right)\right)
%\label{eq:nmc-single}
\end{align*}
where each $y_{n,m}^{(1)} \sim p(y^{(1)}|y_n^{(0)})$ is drawn independently and $I_0$ is now
a NMC estimate using $T=N_0 N_1$ samples.

More generally, one may have multiple layers of nesting.  To notate this, we first presume some fixed 
integral depth $D \ge 0$ (with $D=0$ corresponding to conventional estimation), and real-valued functions $f_0,\dots,f_D$.
We then recursively define
\begin{align*}
&\gamma_D\left(y^{(0:D-1)}\right) = \E \left[f_D\left(y^{(0:D)}\right) \middle| y^{(0:D-1)}\right], \quad \text{and} \\
&\gamma_k(y^{(0:k-1)}) = \E \left[f_k\left(y^{(0:k)}, \gamma_{k+1}\left(y^{(0:k)}\right)\right) \middle| y^{(0:k-1)}\right] 
\end{align*}
for $0 \leq k < D$. Our goal is to estimate $\gamma_0 = \E \left[f_0\left(y^{(0)},
\gamma_1\left(y^{(0)}\right)\right)\right]$, for which the NMC estimate is $I_0$ defined
recursively using
\begin{align}
I_D&\left(y^{(0:D-1)}\right) = \frac{1}{N_D} \sum_{n_D=1}^{N_D} f_D\left(y^{(0:D-1)}, y^{(D)}_{n_D}\right) \quad \text{and}
\nonumber \displaybreak[0]\\
I_k &\left(y^{(0:k-1)}\right) \label{eq:nmc} \\
&= \frac{1}{N_k} \sum_{n_k=1}^{N_k} f_k\left(y^{(0:k-1)}, y^{(k)}_{n_k}, I_{k+1}\left(y^{(0:k-1)}, y^{(k)}_{n_k}\right)\right)
\nonumber
\end{align}
for $0 \leq k < D$, where each $y^{(k)}_n \sim p\left(y^{(k)}|y^{(0:k-1)}\right)$ is drawn
independently. Note that there are multiple values of $y^{(k)}$ for each 
associated $y^{(0:k-1)}$
and that $I_k\left(y^{(0:k-1)}\right)$ is still a random variable given  $y^{(0:k-1)}$.
%, while
%the overall cost of the NMC estimator is $O(T)$ where
%$T=N_0 N_1 \dots N_D$.

As shown by~\cite[Theorem 3]{rainforth2017pitfalls}, if each $f_k$ is continuously differentiable 
and 
\begin{align*}
\varsigma_{k}^2  
&=\E \left[\left(f_k\left(y^{(0:k)},\gamma_{k+1}
\left(y^{(0:k)}\right) \right) \hspace{-2pt}-\hspace{-2pt}\gamma_k\left(y^{(0:k-1)}\right)\right)^2\right]
\end{align*}
$< \infty \,\, \forall k\in 0,\dots,D$, then the MSE converges at rate
\begin{align}
\label{eq:bound-cont}
\begin{split}
&\E \left[\left(I_0 - \gamma_0\right)^2\right] \le 
\frac{\varsigma_0^2}{N_0}+\\
&\left(
\frac{C_0 \varsigma_{1}^2}{2 N_{1}}
+\sum_{k=0}^{D-2}  \left(\prod_{d=0}^{k} K_{d}\right)
\frac{C_{k+1} \varsigma^2_{k+2}}{2 N_{k+2}}
\right)^2 + O(\epsilon)
\end{split}
\end{align}
where $K_k$ and $C_k$ are respectively bounds on the magnitude of the first and
second derivatives of $f_k$, and $O(\epsilon)$ represents asymptotically 
dominated terms -- a convention we will use throughout.
Note that the dominant terms in the bound correspond 
respectively to the variance and the bias squared.
Theorem 2 of~\cite{rainforth2017pitfalls} further shows that the continuously differentiable
assumption must hold almost surely, rather than absolutely, for convergence more generally,
such that  functions with measure-zero discontinuities still converge in general.

We see from~\eqref{eq:bound-cont} that if any of the $N_k$ remain fixed, there is a minimum
error that can be achieved: convergence requires each $N_k\rightarrow\infty$.
As we will later show,
many of the shortfalls in dealing with nested queries by existing PPSs revolve around
implicitly fixing $N_k \, \forall k\ge1$. 
%and taking only $N_0\to\infty$.
%, leading to asymptotic bias.

For a given total sample budget $T=N_0N_1\dots N_D$, the bound is tightest when
$\sqrt{N_0} \propto N_1 \propto \dots \propto N_D$ giving a convergence rate of $O(1/T^{\frac{2}{D+2}})$.  
The intuition behind this potentially surprising optimum setting is that the variance is
mostly dictated by $N_0$ and bias by the other $N_k$.
We see that the convergence
rate diminishes exponentially with $D$.  However, this optimal setting of the $N_k$
still gives a substantially faster rate than the $O(1/T^{\frac{1}{D+1}})$ from 
na\"{i}vely setting $N_0 \propto N_1 \propto \dots \propto N_D$.

\subsection{THE ANGLICAN PPS}
\label{sec:anglican}

Anglican is a universal probabilistic programming language integrated into \emph{Clojure}~\citep{hickey2008clojure}, a dialect of Lisp.
There are two important ideas to understand for reading Clojure: 
almost everything is a function and parentheses cause evaluation.  
For example, $a+b$ is coded as
 {\small \lsi{(+ a b)}} where \clj{+} is a function taking two arguments
and the parentheses cause the
function to evaluate.

Anglican inherits most of the syntax of Clojure, but extends it with the key
special forms \sample and \observe \citep{wood2014new,tolpin2015probabilistic,tolpin2016design}, between which the distribution of the
query is defined. 
Informally, \sample specifies terms in the prior and \observe terms in the
likelihood.  More precisely, \sample is used to make random draws from a provided
distribution and \observe is used to apply conditioning, factoring the probability density
of a program trace by a provided density evaluated at an ``observed'' point. 

The syntax of \sample is to take a \emph{distribution object} as its only input and return a sample. \observe instead
takes a distribution object and an observation and returns {\small \lsi{nil}}, while changing
the program trace probability in Anglican's back-end.  
%Each distribution
%object further contains a sampling procedure and a density function.  
Anglican provides a number of \emph{elementary random procedures}, i.e. distribution object constructors for common sampling distributions, 
but also allows users to define their own distribution object constructors using the \defdist macro.
Distribution objects are generated by calling a class constructor
with the required parameters, e.g. {\small \lsi{(normal 0 1)}}.

Anglican queries are written using the macro \defquery.  This allows users to define a model using a mixture
of \sample and \observe statements and deterministic code, and bind that model to a variable.  
As a simple example,
\vspace{-5pt}
%\begin{figure}
%	\centering 
	\begin{lstlisting}[basicstyle=\ttfamily\small,frame=none]
(defquery my-query [data]
 (let [$\mu$ (sample (normal 0 1))
       $\sigma$ (sample (gamma 2 2))
       lik (normal $\mu$ $\sigma$)]
  (map (fn [obs] (observe lik obs)) data)
  [$\mu$ $\sigma$]))
	\end{lstlisting}	
	\vspace{-8pt}
%	\caption{A simple Anglican query.\label{fig:probprog:simple-ang}}
%\end{figure}
corresponds to a model where we are trying to infer the mean and standard deviation
of a Gaussian given some data.  The syntax of \defquery is {\small \lsi{(defquery name [args] body)}} such
that we are binding the query to \clj{my-query} here.  The query starts by sampling $\mu\sim\mathcal{N}(0,1)$
and $\sigma\sim\Gamma(2,2)$, before constructing a distribution object 
{\small \lsi{lik}}
to use for the observations.  It then maps over each datapoint and observes it under the distribution
{\small \lsi{lik}}.
After the observations are made, $\mu$ and $\sigma$
are returned from the variable-binding \cllet block and then by proxy the query itself.
Denoting the data as $y_{1:S}$ this particular query defines the joint distribution
\[
p(\mu,\sigma,y_{1:S})
= \mathcal{N}(\mu ; 0,1) \; \Gamma(\sigma ; 2, 2) \prod\nolimits_{s=1}^{S} \mathcal{N}(y_s ; \mu, \sigma).
\]
%
%Anglican allows users to write functions externally
%to \defquery using the macro \defm, which is analogous to \defn in Clojure.
%% and has the same call syntax,
%%which can then be called from inside the query without restrictions.  
%Other deterministic Clojure functions can also
%be called from inside queries but must be provided with appropriate wrapping to assist the compiler using the macro
%{\small \lsi{with-primitive-procedures}}.

%Thankfully
%this is done automatically (except when the function itself is higher order) using the macro
%{\small \lsi{with-primitive-procedures}} that takes as inputs Clojure functions and creates an environment where
%these functions have been converted to their appropriate Anglican forms.

Inference on a query is performed using the macro \doquery, which produces a lazy infinite sequence of 
approximate samples from the conditional distribution and, for appropriate inference algorithms,
an estimate of the partition function.
Its calling syntax is {\small \lsi{(doquery inf-alg model inputs & options)}}.
% where {\small \lsi{inf-alg}}
%specifies an inference algorithm, {\small \lsi{model}} is our query, {\small \lsi{inputs}}
%is the query inputs, and {\small \lsi{options}}
%is a series of option-value pairs for the inference engine. 

Key to our purposes is Anglican's ability to \emph{nest} queries within one another.
In particular, the special form \conditional 
takes a query and returns a distribution object constructor, the outputs of which 
ostensibly corresponds to the conditional distribution 
defined by the query, with the inputs to the query becoming its parameters.  
However, as we will show in the next section, the true behavior of \conditional
deviates from this, thereby leading to inconsistent nested inference schemes.

%Though not officially supported usage, one can also nest Anglican queries by using a \doquery in a Clojure function 
%that is then passed to another query using {\small \lsi{with-primitive-procedures}} or by
%using a \doquery within a custom distribution defined by \defdist.  This allows encoding
%estimates as variables.
% !TEX root =  main.tex

\section{NESTED INFERENCE}
\label{sec:samp}

One of the clearest ways one might want to nest queries is by sampling from the conditional
distribution of one query inside another.  A number of examples of this are provided
for Church 
in~\citep{stuhlmuller2014reasoning}.\footnote{Though their nesting 
	happens within the conditioning predicate, Church's semantics
	means they constitute nested inference.}
Such \emph{nested inference} problems fall under a more general framework of inference for so-called
doubly (or multiply) intractable distributions~\citep{murray2006mcmc}.
The key feature of these problems is that they include
terms with unknown, \emph{parameter dependent}, normalization constants.  For nested probabilistic programming
queries, this manifests through \emph{conditional normalization}.

%We will for
%the most part focus on the single nesting case here, but note that our results can be
%straightforwardly extended to the multiple layer case.

Consider the following unnested model using
the Anglican function declaration \defm
\vspace{-4pt}
\begin{lstlisting}[basicstyle=\ttfamily\small,frame=none]
(defm inner [y D]
 (let [z (sample (gamma y 1))]
  (observe (normal y z) D)
  z))
  
(defquery outer [D]
 (let [y (sample (beta 2 3))
       z (inner y D)]
  (* y z)))
\end{lstlisting}
\vspace{-8pt}
Here \clj{inner} is simply an Anglican function: it takes in inputs \clj{y} and \clj{D},
effects the trace probability through its \observe statement, and returns
the random variable \clj{z} as output.  The unnormalized distribution for
this model is thus straightforwardly given by
\begin{align*}
\pi_u(y,z,&D) = p(y)p(z|y)p(D|y,z) \\
=&\textsc{Beta}(y;2,3) \, \Gamma(z;y,1) \, \mathcal{N}(D;y,z^2),
\end{align*}
for which we can use conventional inference schemes.

We can convert this model to a nested inference problem by
using \defquery and \conditional as follows
\vspace{-4pt}
\begin{lstlisting}[basicstyle=\ttfamily\small,frame=none]
(defquery inner [y D]
 (let [z (sample (gamma y 1))]
  (observe (normal y z) D)
  z))
  
(defquery outer [D]
 (let [y (sample (beta 2 3))
       dist (conditional inner)
       z (sample (dist y D))]
  (* y z)))
\end{lstlisting}
\vspace{-8pt}
This is now a nested query: a separate inference procedure is invoked for each call of
\clj{(sample (dist y D))}, returning an approximate sample
from the conditional distribution defined by \clj{inner} when 
input with the current values of
\clj{y} and \clj{D}.  Mathematically, \conditional applies a conditional normalization.
Specifically, the component of $\pi_u$ from the previous example corresponding to \clj{inner} was $p(z|y)p(D|y,z)$ and \conditional locally normalizes this to the probability distribution
$p(z|D,y)$.  The distribution now defined by \clj{outer} is thus given by
\begin{align*}
\pi_n(y,z,D) &= p(y)p(z|y,D)
= \frac{p(y)p(z|y)p(D|y,z)}{\int p(z|y)p(D|y,z)dz} \displaybreak[0] \\
&= p(y)\frac{p(z|y)p(D|y,z)}{p(D|y)} \neq \pi_u(z,y,D).
\end{align*}
Critically, the partial normalization constant $p(D|y)$ depends on $y$ and so 
the conditional distribution is doubly intractable: we cannot evaluate 
$\pi_n(y,z,D)$  exactly.
%By normalizing \clj{inner} within
%\clj{outer},
%\conditional has changed the program's distribution.

Another way of looking at this
is that wrapping \clj{inner} in \conditional has ``protected'' $y$ from the conditioning 
in \clj{inner} (noting
$\pi_u (y,z,D) \propto p(y|D)p(z|y,D)$), such that its \observe statement only affects the probability
of $z$ given $y$ and not the marginal probability of $y$.  This is why, when there is only a single layer of nesting,
nested inference is equivalent to the notion of sampling from
``cut distributions''~\citep{plummer2015cuts}, whereby the sampling of certain subsets of the variables in a model are made with factors of the overall likelihood
omitted.

It is important to note that if we had observed the \emph{output} of the inner query, rather than sampling
from it, this would still constitute a nested inference problem.  The key to the nesting
is the conditional normalization applied by \conditional, not the exact usage of the generated 
distribution object \clj{dist}.  However, as discussed in Appendix~\ref{sec:app:nest-obv}, actually 
observing a nested query requires numerous additional computational issues to
be overcome, which are beyond the scope of this paper.  We thus focus on the nested sampling scenario.

\subsection{MOTIVATING EXAMPLE}
\label{sec:mot}

Before jumping into a full formalization of nested inference, we first consider
the motivating example of modeling a poker player who reasons about another player.
Here each player has access to information the other does not, namely the cards 
in their hand, and they must perform their own inference to deal with the resulting uncertainty.

Imagine that the first player is deciding whether or not to bet.  She could na\"{i}vely just make this
decision based on the strength of her hand, but more advanced play requires her to reason about
actions the other player might take given her own action, e.g. by considering
whether a bluff is likely to be successful.
%, i.e. betting even though
%she has a weak hand in the hope that the other player will fold.  
She can carry out
such reasoning by constructing a model for the other player to try and predict 
their action given her action and their hand.  Again this nested model could just simply be based
on a na\"{i}ve simulation, but we can refine it by adding another layer of meta-reasoning:
the other player will themselves try to infer the first player's hand
to inform their own decision.

These layers of meta-reasoning create a nesting: for the first
player to choose an action, they must run multiple simulations 
for what the other player will do given that action and their hand, each of
which requires inference to be carried out.  Here adding more levels of meta-reasoning
can produce smarter models, but also requires additional layers of nesting.
We expand on this example to give a concrete nested inference problem in Appendix~\ref{sec:poker}.

%and then performing inference to predict
%their action given her action and their hand.  By marginalizing out over possible hands for
%her opponent, the first player can thus establish the relative utility of different actions
%for a given hand.

%A more realistic demonstration of the utility of defining nested inference problems
%is given in Appendix~\ref{sec:poker} where we consider the modeling of two poker players.

\subsection{FORMALIZATION}

To formalize the nested inference problem more generally, let $y$ and $x$ denote
all the random variables
of an outer query that are respectively passed or not to the inner query.
Further, let $z$ denote all random variables generated in the inner query -- for simplicity,
we will assume, without loss of generality, that these are all returned to the outer
query, but that
some may not be used.  The unnormalized density for the outer query
can now be written in the form
 \begin{align}
 \label{eq:outer-q}
 \pi_o(x,y,z) &= \psi(x,y,z)  p_i(z|y)
 \end{align}
 where $p_i(z|y)$ is the normalized density of the outputs of the inner query
 and $\psi(x,y,z)$ encapsulates all other terms influencing the trace probability 
 of the outer query.  Now the inner query defines an unnormalized density
 $\pi_i (y,z)$ that can be evaluated pointwise and we have
 \begin{align}
 p_i (z|y) = \frac{\pi_i (y,z)}{\int \pi_i (y,z') dz'} \quad \text{giving}
 \end{align}
  \begin{align}
 p_o(x,y,z) \propto \pi_o(x,y,z) &= \frac{\psi(x,y,z) \pi_i (y,z)}{\int \pi_i (y,z') dz'}
 \end{align}
 where $p_o(x,y,z)$ is our target distribution, for which 
 we can directly evaluate the numerator, but the denominator is
 intractable and must be evaluated separately for each possible value of
 $y$.  Our previous example is achieved by fixing
 $\psi(x,y,z) = p(y)$ and $\pi_i (y,z) = p(z|y)p(D|y,z)$.  We can further
 straightforwardly extend to the multiple layers of nesting setting by
 recursively defining $\pi_i (y,z)$ in the same way as $\pi_o(x,y,z)$.
 %, rather than presuming that it is available in closed form.

\subsection{RELATIONSHIP TO NESTED ESTIMATION}

To relate the nested inference problem back to 
the nested estimation formulation from Section~\ref{sec:nmc}, we consider using a proposal
$q(x,y,z)=q(x,y)q(z|y)$ to calculate the expectation of some arbitrary function 
$g(x,y,z)$ under $p_o(x,y,z)$ as per self-normalized
importance sampling
\begin{align}
&\E_{p_o(x,y,z)} \left[g(x,y,z)\right]
%&= \E_{q (x,y,z)} \left[\frac{g(x,y,z) \pi_o(x,y,z)}{q (x,y,z) \iiiint \pi_o(x,y,z) dxdydz}\right] \nonumber\\
= \frac{\E_{q (x,y,z)} \left[\frac{g(x,y,z) \pi_o(x,y,z)}{q (x,y,z)}\right]}
{\E_{q (x,y,z)} \left[\frac{\pi_o(x,y,z)}{q (x,y,z)}\right]} \nonumber \displaybreak[0]\\
&=\cfrac{\E_{q (x,y,z)} \left[\cfrac{g(x,y,z) \psi(x,y,z) \pi_i (y,z)}
	{q (x,y,z)\E_{z' \sim q (z|y)} \left[\pi_i (y,z') / q (z'|y)\right]}\right]}
{\E_{q (x,y,z)} \left[\cfrac{\psi(x,y,z) \pi_i (y,z)}
	{q (x,y,z)\E_{z' \sim q (z|y)} \left[\pi_i (y,z') / q (z'|y)\right]}\right]}. \label{eq:nest-inf}
\end{align}
Here both the denominator and numerator are nested expectations with a
%For example, for the denominator
%$y^{(0)} = (x,y,z)$, $y^{(1)} = z'$, $f_1(y^{(0:1)}) = \pi_i (y,z') / q (z'|y)$,
%$\gamma_1(y^{(0)}) = \E_{z' \sim q (z|y)} \left[\pi_i (y,z') / q (z'|y)\right]$,  
%\[
%f_0 \left(y^{(0)},\gamma_1(y^{(0)})\right) = 
%\frac{\psi(x,y,z) \pi_i (y,z)}{q (x,y,z) \gamma_1(y^{(0)})},
%\]
nonlinearity coming from the fact that we are using the reciprocal of
an expectation.
A similar reformulation could also be applied in cases with
multiple layers of nesting, i.e. where \clj{inner} itself makes use of
another query.  The formalization can also be directly extended to the sequential MC
(SMC) setting by invoking extended space arguments~\citep{andrieu2010particle}.

Typically $g(x,y,z)$ is not known upfront and we instead return an
empirical measure from the program in the form of weighted
samples which can later be used to estimate an expectation.  That is, if we sample $(x_n,y_n) \sim q(x,y)$ and
$z_{n,m} \sim q(z|y_n)$ and return all samples $(x_n,y_n,z_{n,m})$ (such that
each $(x_n,y_n)$ is duplicated $N_1$ times in the sample set)
then our
unnormalized weights are given by
\begin{align}
w_{n,m} = 
\frac{\psi(x_n,y_n,z_{n,m}) \pi_i (y_n,z_{n,m})}{q (x_n,y_n,z_{n,m}) 
\frac{1}{N_1}\sum_{\ell=1}^{N_1} \frac{\pi_i (y_n,z_{n,\ell})}{q (z_{n,\ell}|y_n)}}.
\label{eq:weights}
\end{align}
This, in turn, gives us the empirical measure
\begin{align}
\hat{p}(\cdot) = \frac{\sum_{n=1}^{N_0} \sum_{m=1}^{N_1}
w_{n,m} \delta_{(x_n,y_n,z_{n,m})}(\cdot)}{\sum_{n=1}^{N_0} \sum_{m=1}^{N_1} w_{n,m}}
\label{eq:emp-measure}
\end{align}
where $\delta_{(x_n,y_n,z_{n,m})}(\cdot)$ is a delta function centered on $(x_n,y_n,z_{n,m})$.
By definition, the convergence of this empirical measure to the target
requires that
expectation estimates calculated using it converge in
probability for \emph{any} integrable $g(x,y,z)$ (presuming our proposal
is valid).  We thus see that
the convergence of the ratio of nested
expectations in~\eqref{eq:nest-inf} for any arbitrary $g(x,y,z)$, is equivalent
to the produced samples converging to the distribution defined by the program.  Informally,
the NMC results then tell us this will happen in the limit $N_0, N_1 \to \infty$
provided that $\int \pi_i (y,z) dz$ is strictly positive for all possible $y$ (as otherwise
the problem becomes ill-defined).
%\footnote{It is important to also take
%	the convention that the weight is zero whenever $\pi_i (y_n,z_{n,1})=0$ to avoid
%	$0/0$ complications.}
More formally we have the following result.  Its proof, along with all others,
is given in Appendix~\ref{sec:proofs}.
\begin{restatable}{theorem}{nestinf}
	\label{the:nestinf}
Let $g(x,y,z)$ be an integrable function, let $\gamma_0 = \E_{p_o(x,y,z)} [g(x,y,z)]$,
and let $I_0$ be a self-normalized MC estimate for $\gamma_0$ calculated using $\hat{p}(\cdot)$
as per \eqref{eq:emp-measure}.
Assuming that $q(x,y,z)$ forms a valid importance sampling proposal distribution for $p_o(x,y,z)$,
then
\begin{align}
\label{eq:nest-inf-rate}
\E \left[\left(I_0-\gamma_0\right)^2\right] = \frac{\sigma^2}{N_0}+\frac{\delta^2}{N_1^2}+O(\epsilon)
\end{align}
where $\sigma$ and $\delta$ are constants derived in the proof and, as before,
$O(\epsilon)$ represents asymptotically dominated terms.
\end{restatable}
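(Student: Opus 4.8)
The plan is to cast the self-normalized estimator in \eqref{eq:nest-inf} as a single-layer ($D=1$) nested Monte Carlo estimator, apply the leading-order rate behind \eqref{eq:bound-cont}, and finish with the standard ratio/delta-method argument for self-normalized importance sampling. First I would absorb $\gamma_0$ into the integrand by setting $h(x,y,z) := g(x,y,z) - \gamma_0$, so that $\E_{p_o}[h] = 0$ and $I_0 - \gamma_0 = \hat M / \hat Z$, where $\hat M$ and $\hat Z$ denote the sample averages of $w_{n,m}\, h(x_n,y_n,z_{n,m})$ and of $w_{n,m}$ respectively. Writing $\beta(y) := \int \pi_i(y,z')\,dz'$ and $\hat\beta_n := \frac{1}{N_1}\sum_{\ell=1}^{N_1} \pi_i(y_n,z_{n,\ell})/q(z_{n,\ell}\mid y_n)$, the weight \eqref{eq:weights} factors as $w_{n,m} = \tilde w_{n,m}\, \beta(y_n)/\hat\beta_n$, where $\tilde w_{n,m} := \psi(x_n,y_n,z_{n,m})\,\pi_i(y_n,z_{n,m}) / \big(q(x_n,y_n,z_{n,m})\,\beta(y_n)\big)$ is the ``idealized'' non-nested importance weight. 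Each summand then has precisely the nested-estimation structure of Section~\ref{sec:nmc}: outer variable $(x,y,z)\sim q$, inner variable $z' \sim q(\cdot\mid y)$, inner integrand $f_1 = \pi_i(y,z')/q(z'\mid y)$ with conditional mean $\gamma_1(y) = \beta(y)$, and an outer nonlinearity that is affine in the reciprocal $1/\hat\beta_n$.

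Next I would check the hypotheses behind \eqref{eq:bound-cont} in this instance. The map $u \mapsto 1/u$ is infinitely differentiable away from the origin, so I need $\beta(y)$ bounded away from zero over the support of $y$ --- exactly the well-posedness condition flagged just before the theorem, which is implied by the validity of $q$ together with $\int \pi_i(y,z')\,dz' \in (0,\infty)$ --- and I need the level-$0$ and level-$1$ second moments $\varsigma_0^2, \varsigma_1^2$ finite, which follows from validity of $q$ as a proposal at both the outer and inner levels. Granting these, the leading-order analysis underlying \eqref{eq:bound-cont} with $D=1$ gives $\E[\hat M^2] = a/N_0 + b/N_1^2 + O(\epsilon)$ (using $\E_{p_o}[h]=0$, so that $\hat M$ targets $0$), and likewise $\E[(\hat Z - Z)^2] = a'/N_0 + b'/N_1^2 + O(\epsilon)$ with $Z := \int \pi_o(x,y,z)\,dx\,dy\,dz$; because $\hat M$ and $\hat Z$ use the same draws, their cross moment $\E[\hat M(\hat Z - Z)]$ admits a bound of the same form. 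Substituting $I_0 - \gamma_0 = \hat M / \hat Z = \hat M / Z - \hat M (\hat Z - Z)/Z^2 + \cdots$ into $\E[(I_0 - \gamma_0)^2]$, the leading piece $\E[\hat M^2]/Z^2$ contributes $\sigma^2/N_0 + \delta^2/N_1^2$: here $\sigma^2$ is the usual asymptotic self-normalized importance-sampling variance computed with the \emph{exact} inner normalization $\beta(y)$, while $\delta$ is the $O(1/N_1)$ bias that the curvature of the reciprocal (the constant $C_0$ of \eqref{eq:bound-cont}) induces when it acts on the inner estimate $\hat\beta_n$; every remaining contribution --- the $\hat Z - Z$ corrections and the Taylor remainder of $1/\hat Z$ --- is $O(\epsilon)$.

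I expect the main obstacle to be a bookkeeping subtlety that separates this estimator from a textbook $D=1$ NMC estimator: the same inner draws $z_{n,1},\dots,z_{n,N_1}$ act both as the ``outer'' atoms indexed by $m$ in \eqref{eq:emp-measure} and inside the normalizer $\hat\beta_n$, and the outer sum ranges over $N_0 N_1$ terms that are only conditionally independent (in blocks of $N_1$ sharing a common $(x_n,y_n)$) rather than i.i.d. I would handle this either by observing that the extra within-block correlation injects only terms of order $1/(N_0 N_1)$, which are dominated by $1/N_0$ and --- under the recommended $N_0 \propto N_1^2$ scaling --- by $1/N_1^2$, hence absorbed into $O(\epsilon)$ and a redefinition of $\sigma^2$ and $\delta^2$; or, for a self-contained treatment, by repeating the variance and bias computation underpinning \eqref{eq:bound-cont} directly for this estimator, conditioning first on $\{(x_n,y_n)\}_n$ and then on $\{z_{n,m}\}$. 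A secondary technical point is making the self-normalization rigorous --- i.e. controlling the delta-method remainder --- which goes through since $\hat Z \to Z > 0$ in probability at the rate just established.
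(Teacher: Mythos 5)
Your route---an explicit Taylor expansion of the known reciprocal nonlinearity for the inner normalizer, followed by a delta-method treatment of the self-normalizing ratio---is essentially the paper's own proof (the paper expands $\hat G$ and $\hat Z$ separately and multiplies, rather than working with the centered $\hat M=\hat G-\gamma_0\hat Z$, but that difference is cosmetic and yields the same $\sigma$ and $\delta$). The genuine problem is your primary handling of the sample reuse. The correlation created by the atom $z_{n,m}$ also appearing inside its own normalizer $\hat\beta_n$ is \emph{not} an $O(1/(N_0N_1))$ effect that can be pushed into $O(\epsilon)$: conditionally on $(x_n,y_n)$, the first-order term of the expansion of $1/\hat\beta_n$ about $\beta(y_n)$ has expectation $-\mathrm{Cov}\bigl(\tfrac{1}{N_1}\sum_m \tilde w_{n,m}h(x_n,y_n,z_{n,m}),\,\hat\beta_n\bigr)/\beta(y_n)^2$, which is $\Theta(1/N_1)$ precisely because each atom is one of the $N_1$ draws in $\hat\beta_n$ (with a fresh atom it would vanish, as the paper notes when it remarks one could draw an additional $z$ for the outer estimator). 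This is the term $T_1=\tfrac{1}{N_1}\bigl(Z-\iiint \pi_o(x,y,z)\,p_i(z|y)/q(z|y)\,dx\,dy\,dz\bigr)$ in the paper's proof, which survives only through the $m$-matched component (the $m\neq 1$ cross terms vanish as in \eqref{eq:bias_zero}) and enters $\delta_z$, and its analogue $\delta_g$, on an equal footing with the curvature term you do account for. So your ``option (a)'' delivers the correct rate but a wrong $\delta$, which matters here because the theorem is an asymptotic \emph{equality} with $\delta$ derived in the proof; also note the same within-block structure is what makes the variance $\Theta(1/N_0)$ rather than $1/(N_0N_1)$, so describing its effect as $O(1/(N_0N_1))$ conflates two different issues.

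The fix is exactly your ``option (b)'', which is what the paper does: condition on $(x_n,y_n)$, expand $1/\hat\beta_n$ about $\beta(y_n)$, and retain all $O(1/N_1)$ contributions while separating the matched ($m=\ell$) from the unmatched terms, first for $\hat Z$ and then for the numerator with $\psi$ replaced by $\psi g$ (or $\psi h$ in your centering); the ratio step then goes through as you describe, with the $\hat M(\hat Z-Z)$ cross terms and Taylor remainder genuinely $O(\epsilon)$. Two smaller points: invoking the bound \eqref{eq:bound-cont} verbatim only gives an inequality, so the equality must come from this exact leading-order computation (your curvature discussion implicitly does this, but it should be made explicit); and requiring $\beta(y)$ bounded away from zero is stronger than what is assumed---the paper only needs $\int\pi_i(y,z)\,dz>0$ for all attainable $y$, handling the residual $0/0$ cases by convention.
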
	
Note that rather than simply being a bound, this result is an equality
and thus provides the exact asymptotic rate.   
Using the arguments of~\cite[Theorem 3]{rainforth2017pitfalls}, it can be straightforwardly
extended to cases of multiple nesting (giving a rate analogous to~\eqref{eq:bound-cont}), though characterizing $\sigma$ and
$\delta$ becomes more challenging.

\subsection{CONVERGENCE REQUIREMENTS}

We have demonstrated that the problem of nested inference is a particular case of
nested estimation.  
This problem equivalence will hold whether we elect to use 
the aforementioned nested importance sampling based approach or not, 
while we see that our finite
sample estimates must be biased for non-trivial $g$
by the convexity of $f_0$ and Theorem 4 of~\cite{rainforth2017pitfalls}.
Presuming we cannot produce exact samples from the inner query
and that the set of possible inputs to the inner query is not finite (these are respectively considered in Appendix~\ref{sec:exact} and Appendix~\ref{sec:special}), we thus
see that there is no ``silver bullet'' that can reduce the problem to a standard
estimation.

We now ask, what behavior do we need for Anglican's \conditional, 
and nested inference more generally, to ensure convergence? 
At a high level, the NMC results show us that we need the computational
budget of each call of a nested query to become arbitrarily large,
such that we use an infinite number of samples at each layer of the estimator:
we require each $N_k\to\infty$.

We have formally demonstrated convergence when this requirement is
satisfied and the previously introduced nested importance sampling
approach is used.  
Another possible approach would be to, instead of drawing samples to
estimate~\eqref{eq:nest-inf} directly, importance sample $N_1$
times for each call of the inner query and then return a single sample from
these, drawn in proportion to the inner query importance weights.  
We can think of this as drawing the same raw samples, but then
constructing the estimator as 
\begin{align}
\hat{p}^*(\cdot) &= \frac{\sum_{n=1}^{N_0}
	w_{n}^* \delta_{(x_n,y_n,z_{n,m^*(n)})}(\cdot)}{\sum_{n=1}^{N_0} w_{n}^*}
\label{eq:emp-measure-nonRB} \\
\text{where} \quad w^*_n &= \frac{\psi(x_n,y_n,z_{n,m^*(n)})}{q(x_n,y_n)} \quad
\text{and} \\
m^*(n) \sim \,&\textsc{Discrete}\left(\cfrac{
	\pi_i (y_n,z_{n,m})/q (z_{n,m}|y_n)}
{\sum_{\ell=1}^{N_1}\pi_i (y_n,z_{n,\ell})/q (z_{n,\ell}|y_n)}\right) \nonumber
\end{align}
As demonstrated formally in Appendix~\ref{sec:proofs}, this approach
also converges.  However, if we Rao Blackwellize~\citep{casella1996rao} 
the sampling of $m^*(n)$,
we find that this recovers~\eqref{eq:emp-measure}.  Consequently, this
is a strictly inferior estimator (it has an
increased variance relative to~\eqref{eq:emp-measure}).  
Nonetheless, it may often be a convenient setup from the perspective of
the PPS semantics and it will typically have substantially reduced
memory requirements: we need only store the single returned sample from
the inner query to construct our empirical measure, rather than all of the
samples generated within the inner query.

Though one can use the results of~\cite{fort2016mcmc} to show the correctness of instead using an MCMC estimator for the outer query, the correctness of using MCMC methods for the
inner queries is not explicitly covered by existing results.
Here we find that we need to start a new Markov chain for each call of the inner query
because each value of $y$ defines a different local inference problem.
One would intuitively expect the NMC results to carry over -- as $N_1\to\infty$ 
all the inner queries will run their Markov chains for an infinitely long time, 
thereby in principle returning exact samples
-- but we leave formal proof of this case to future work.
We note that such an approach effectively equates to what
is referred to as \emph{multiple imputation} by~\cite{plummer2015cuts}.

\subsection{SHORTFALLS OF EXISTING SYSTEMS}

Using the empirical measure~\eqref{eq:emp-measure} provides one
possible manner of producing a consistent estimate of our target
by taking $N_0,N_1 \to \infty$ and so we can use this as a gold-standard
reference approach (with a large value of $N_1$) to assess
whether Anglican returns samples for the correct target distribution.
To this end, we ran Anglican's importance sampling inference
engine on the simple model introduced earlier and compared its output
to the reference approach using $N_0 = 5\times 10^6$ and $N_1=10^3$.
As shown in Figure~\ref{fig:hists}, the samples produced by Anglican are 
substantially different to
the reference code, demonstrating that the outputs do not match 
their semantically intended distribution.
For reference, we also considered the distribution induced by the
aforementioned unnested model and a na\"{i}ve estimation scheme where a sample budget of $N_1=1$ is used for each call
to \clj{inner}, effectively corresponding to ignoring the \observe statement by directly returning the
first draw of $z$.  

We see that the unnested model defines a noticeably different distribution, while the behavior
of Anglican is similar, but distinct, to ignoring the \observe statement in the inner query.  Further
investigation shows that the default behavior of \conditional in a query nesting context is equivalent to using~\eqref{eq:emp-measure-nonRB} 
but with $N_1$ held fixed to at $N_1=2$, inducing a substantial bias. 
More generally, the Anglican source code shows that \conditional defines a Markov chain
generated by equalizing the output of the weighted samples generated by running
inference on the query.
When used to nest
queries, this Markov chain is only ever run for a finite length of time, specifically one accept-reject step is carried out, and so does
not produce samples from the true conditional distribution.

%However,
%our results show that the default use of \conditional does not have this interpretation in the nested query
%context.  In truth, \conditional defines a Markov chain whose samples converge in distribution to the conditional
%distribution defined by the query.  When using \conditional as an alternative to \doquery, this is not problematic
%as estimates calculated using the outputs converge by the strong law of large numbers.  However, when used to nest
%queries, this Markov chain is only ever run for a finite length of time (specifically one accept-reject step is
%carried out) and so does
%not produce samples from the true conditional distribution.  By default, this Markov chain is generated by equalizing
%the output of sequence of importance samples (i.e. it is a Metropolis sampler using independent proposals).  However,
%\conditional does allow the specification of an inference algorithm and associated options, which in the nested
%query context leads to explicitly distinct distributions.

\begin{figure}[t]
	\centering
	{\includegraphics[width=0.95\textwidth,trim={1.5cm 0 3.5cm 2cm},clip]{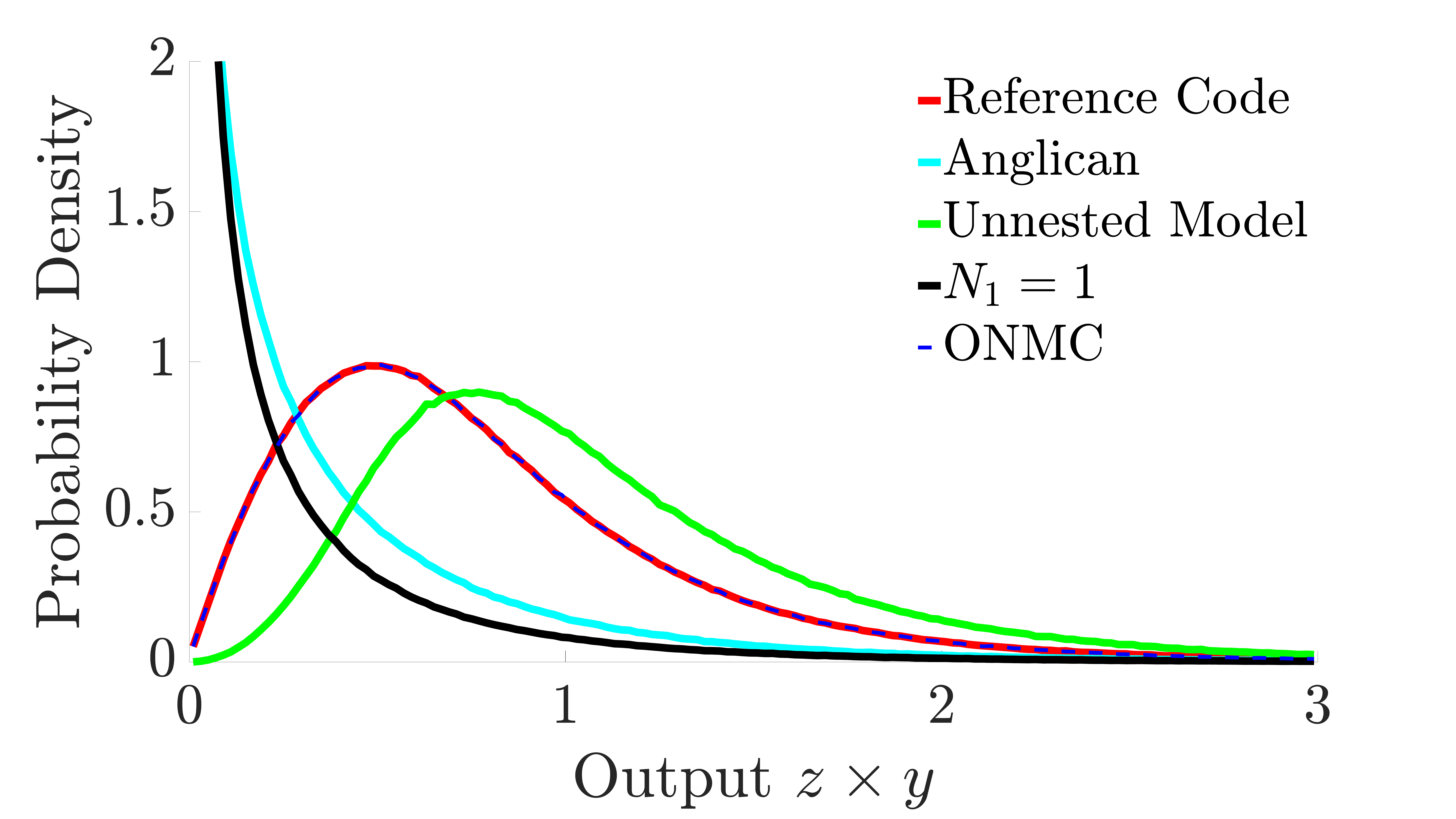}}
	\vspace{-10pt}
	\caption{Empirical densities produced by running the nested Anglican
			queries given in the text, a reference NMC estimate,
			the unnested model,
			a na\"{i}ve estimation scheme where $N_1=1$, and 
			the ONMC approach introduced in Section~\ref{sec:ONMC}, with the same
			computational budget of $T=5\times10^9$ and 
			$\tau_1(n_0) = \min (500,\sqrt{n_0})$.  
			Note that the results for ONMC and the reference approach overlap.
			\label{fig:hists}		\vspace{-28pt}}
\end{figure}	

\cite{plummer2015cuts} noticed that WinBugs and OpenBugs~\citep{spiegelhalter1996bugs} similarly
 do not provide valid inference when using their cut function primitives, which effectively
 allow the definition of nested inference problems.  However,
they do not notice the equivalence to the NMC  formulation and instead propose
a heuristic for reducing the bias that itself has no theoretical guarantees.
% !TEX root =  main.tex

\vspace{2pt}

\section{NESTED CONDITIONING}
\label{sec:cond}

An alternative way one might wish to nest queries is to 
use the partition function estimate of one query to factor the trace probability of another.
We refer to this as \emph{nested conditioning}.  
In its simplest form, we can think about conditioning on the values input
to the inner query.  In Anglican we can carry this out by
using the following custom distribution object constructor
\vspace{-4pt}
\begin{lstlisting}[basicstyle=\ttfamily\footnotesize,frame=none]
(defdist nest [inner inputs inf-alg M] []
 (sample [this] nil)
 (observe [this _] 
  (log-marginal (take M 
   (doquery inf-alg inner inputs)))))    
\end{lstlisting}
\vspace{-8pt}
When the resulting distribution object is observed, this will now generate, and factor the
trace probability by,
a partition function estimate for \clj{inner} with inputs \clj{inputs}, constructed using \clj{M} 
samples of the inference algorithm \clj{inf-alg}.  For example, if we were to use the query 
\vspace{-4pt}
\begin{lstlisting}[basicstyle=\ttfamily\footnotesize,frame=none]
(defquery outer [D]
 (let [y (sample (beta 2 3))]
  (observe (nest inner [y D] :smc 100) nil) 
  y))
\end{lstlisting}
\vspace{-8pt}
with \clj{inner} from the nested inference example, then this would form a
pseudo marginal sampler~\citep{andrieu2009pseudo} for the unnormalized target distribution
\begin{align*}
\pi_c(y,D)=&\textsc{Beta}(y;2,3) \int  \, \Gamma(z;y,1) \, \mathcal{N}(D;y,z^2) dz.
\end{align*}

Unlike the nested inference case, nested conditioning turns out to be valid even if our
budget is held fixed, provided that the partition function estimate is unbiased,
as is satisfied by, for example, importance sampling and SMC.  In fact, it is important to hold the budget fixed to achieve 
a MC convergence rate. In general, we can define our
target density as
\begin{align}
\label{eq:obs-target}
p_o(x,y) \propto \pi_o(x,y) = \psi(x,y) p_i(y),
\end{align}
where $\psi(x,y)$ is as before (except that we no longer have returned
variables from the inner query) and $p_i(y)$ is the true partition function
of the inner query when given input $y$.  In practice, we cannot evaluate
$p_i(y)$ exactly, but instead produce unbiased estimates $\hat{p}_i(y)$.
Using an analogous self-normalized importance sampling to the nested inference
case leads to the weights
\begin{align}
w_{n} = 
\psi(x_n,y_n) \hat{p}_i (y_n)/q (x_n,y_n)
\label{eq:weights-obs}
\end{align}
and corresponding empirical measure
\begin{align}
\hat{p}(\cdot) = \frac{1}{\sum_{n=1}^{N_0}w_{n}}
\sum_{n=1}^{N_0}	w_{n,} \delta_{(x_n,y_n)}(\cdot)
\label{eq:emp-measure-obs}
\end{align}
such that we are conducting conventional MC estimation, but our
weights are now themselves random variables for a given $(x_n,y_n)$ due to
the $\hat{p}_i (y_n)$ term.  However, the weights are unbiased estimates
of the ``true weights'' $\psi(x_n,y_n) p_i (y_n)/q (x_n,y_n)$ such 
that we have proper weighting~\citep{naessethLS2015nested} and thus
convergence at the standard MC rate, provided the budget of the
inner query remains fixed.  This result also follows directly from
Theorem 6 of~\cite{rainforth2017pitfalls}, which further ensures no complications arise when
conditioning on multiple queries if the corresponding partition function
estimates are generated independently.
These results further trivially extend to the repeated nesting case by recursion, while
using the idea of pseudo-marginal methods~\citep{andrieu2009pseudo}, the results
also extend to using MCMC based inference for the outermost query.

%
%For hard-constraints such
%as used in Church, this is equivalent to query nesting through sampling.  However, for soft
%constraints it can be substantially different.  The distinction is the same as between
%\sample and \observe in Anglican -- we are observing an outcome of a program, rather than sampling
%from it.  As \observe returns no outputs, the mathematical interpretation of observing 
%the output of another query is that we are factoring the outer program trace weight by the
%partition function estimate of the inner query.  Presuming that our partition function estimates
%are unbiased, 
%this corresponds to a particular instance of the products of expectations case.  It, therefore, follows that if
%the outer estimator uses importance sampling or SMC, then the approach provides standard Monte Carlo convergence by
%\textbf{Theorem 6}.  More
%generally, this case corresponds to a pseudo-marginal approach where the samples
%from the inner estimator are never returned.  We can, therefore, use existing results, e.g.~\cite{andrieu2009pseudo},
%to further demonstrate the consistency of the approach for other choices of outer estimator.
%For example, consistency when using SMC in the outer estimator follows directly
%from a combination of the consistency of the importance sampler and the results
%of~\cite{naessethLS2015nested}.
%It should also be possible to construct consistent approaches using MCMC samplers
%at the top level as per~\cite{andrieu2009pseudo}.

Rather than just fixing the inputs to the nested query, one can also consider
conditioning on the internally sampled variables in the program taking on certain values.  
Such a nested conditioning approach has been implicitly carried out 
by~\citet{rainforth2016bayesian,zinkov2016composing,scibior2016modular,turing17},
each of which manipulate the original program in some fashion to construct a partition
function estimator that is used used within a greater inference scheme, e.g. a PMMH estimator~\citep{andrieu2010particle}.

\section{ESTIMATES AS VARIABLES}
\label{sec:var}

Our final case is that one might wish to use estimates as first class variables in another query.  
In other words, a variable in an outer query is assigned to a MC expectation estimate calculated
from the outputs of running inference on another, nested, query.  By comparison, the nested inference
case (without Rao-Blackwellization) can be thought of as assigning a variable in the outer query 
to a single approximate sample from the conditional distribution of the inner query, rather than an MC expectation
estimate constructed by averaging over multiple samples.

Whereas nested inference can only encode a certain class
of nested estimation problems -- because the only nonlinearity originates from taking
the reciprocal of the partition function -- using estimates as variables allows, in principle, 
the encoding of any nested estimation.  This is because using the estimate as a first class
variable allows arbitrary nonlinear mappings to be applied by the outer query.  

An example of this approach is shown in Appendix~\ref{sec:exp}, where we construct a generic estimator
for Bayesian experimental design problems.  Here a partition function estimate is constructed
for an inner query and is then used in an outer query.  The output of the outer query depends on the
logarithm of this estimate, thereby creating the nonlinearity required to form a nested expectation.

%In our previous examples,
%nonlinear usage of expectations only ever manifested through inversion of the normalization
%constant.  These methods are therefore clearly insufficient for expressing more general
%nested estimation problems as would be required by, for example, experimental design.
%Using estimates as first class variable in a program allows, in principle, the encoding of
%any nested estimation problem -- using these estimates in the outer query gives rise to
%nonlinear mappings and expectations taken with respect to the outer query become nested estimation
%problem. Thus,
Because using estimates as variables allows the encoding of any nested estimation problem, 
the validity of doing so is equivalent
to that of NMC more generally and must thus satisfy the requirements set out in~\citep{rainforth2017pitfalls}.  
In particular, one needs to ensure that the budgets used for
the inner estimates increase as more samples of the outermost query are taken.

%
%Anglican allows users to write functions externally
%to \defquery using the macro \defm, which is analogous to \defn in Clojure.
%% and has the same call syntax,
%%which can then be called from inside the query without restrictions.  
%Other deterministic Clojure functions can also
%be called from inside queries but must be provided with appropriate wrapping to assist the compiler using the macro
%{\small \lsi{with-primitive-procedures}}.
%
%
%Though not officially supported usage, one can also nest Anglican queries by using a \doquery in a Clojure function 
%that is then passed to another query using {\small \lsi{with-primitive-procedures}} or by
%using a \doquery within a custom distribution defined by \defdist.  This allows encoding
%estimates as variables.
% !TEX root = main.tex

\section{ONLINE NESTED MONTE CARLO}
\label{sec:ONMC}

NMC will  be highly inconvenient to actually implement in 
a PPS whenever one desires to provide online estimates; for example,
a lazy sequence of samples that converges to the target distribution.
Suppose that we have already calculated an NMC estimate, but now desire to 
refine it further.
In general, this will require an increase to all $N_k$ for each 
sample of the outermost estimator.  Consequently, the previous samples of the
outermost query must be revisited to refine their estimates.
This significantly complicates practical implementation, necessitating additional communication between
queries, introducing computational overhead, and potentially substantially increasing the memory requirements.

To highlight these shortfalls concretely, consider
the nested inference class of problems and, in particular,
constructing the un--Rao--Blackwellized
estimator~\eqref{eq:emp-measure-nonRB} in an online fashion. Increasing
$N_1$ requires $m^*(n)$ to be redrawn for each $n$, which in
turn necessitates storage of previous samples and weights.\footnote{Note that
not all previous samples and weights need storing -- when making the update
we can sample whether to change $m^*(n)$ or not based on combined weights
from all the old samples compared to all the new samples.}  This leads to an overhead cost
from the extra computation carried out for re-visitation
and a memory overhead from having to store information about each call
of the inner query.  

Perhaps even more problematically, 
the need to revisit old samples when drawing new samples can cause
substantial complications for implementation.  Consider
implementing such an approach in Anglican.  Anglican is designed to 
return a lazy infinite sequence of samples converging to the target
distribution.  Once samples are taken from this sequence, they become
external to Anglican and cannot be posthumously updated when
further samples are requested.  Even when all the output samples
remain internal, revisiting samples remains difficult: one either needs
to implement some form of memory for nested queries so they can be
run further, or, if all information is instead stored at the outermost level,
additional non-trivial code is necessary to apply post-processing and to revisit
queries with previously tested inputs.  The latter of these is likely to
necessitate inference--algorithm--specific changes, particularly when there
are multiple levels of nesting, thereby hampering the entire language construction.

To alleviate these issues, we propose to only increase the computational
budget of \emph{new} calls to nested queries, such that earlier calls use fewer samples
than later calls.  This simple adjustment removes the need for
communication between different calls and requires only the 
storage of the number of times the outermost query has previously been sampled
to make updates to the overall estimate.
We refer to this approach as \emph{online} NMC (ONMC),
which, to the best of our knowledge, has not been previously considered in the literature.
As we now show, ONMC only leads to small
changes in the convergence rate of the resultant estimator compared 
to NMC: using recommended parameter settings, the asymptotic root mean squared error
for ONMC is never more than twice that of NMC
for a matched sample budget and can even be smaller.

Let $\tau_k (n_0) \in \mathbb{N}^+, k=1,\dots,D$ be monotonically increasing
functions dictating the number
of samples used by ONMC at depth $k$ for the $n_0$-th iteration of
the outermost estimator.  The ONMC estimator is defined as
\begin{align}
	\label{eq:onmc}
J_0 &
= \frac{1}{N_0}
\sum_{n_0=1}^{N_0} f_0\left(y^{(0)}_{n_0}, I_{1}\left(y^{(0)}_{n_0}, \tau_{1:D}(n_0)\right)\right) 
%\displaybreak[0] \\
%J_k &\left(y^{(0:k-1)},n_0\right) 
%= \frac{1}{\tau_k(n_0)} \\
%&\sum_{n_k=1}^{\tau_k(n_0)} f_k\left(y^{(0:k-1)}, y^{(k)}_{n_k}, J_{k+1}\left(y^{(0:k-1)}, y^{(k)}_{n_k}, n_0\right)\right) \displaybreak[0] \\
%J_D&\left(y^{(0:D-1)},n_0\right) = \frac{1}{\tau_D(n_0)} \sum_{n_D=1}^{\tau_D(n_0)} f_D\left(y^{(0:D-1)}, y^{(D)}_{n_D}\right)
\end{align}
where $I_{1}(y^{(0)}_{n_0}, \tau_{1:D}(n_0))$ is calculated using $I_{1}$ in
\eqref{eq:nmc}, setting $y^{(0)}=y^{(0)}_{n_0}$ and $N_k = \tau_k(n_0), \forall k \in 1, \dots, D$.  
For reference, the NMC estimator, $I_0$, is as per~\eqref{eq:onmc}, except for replacing $\tau_{1:D}(n_0)$ with
$\tau_{1:D}(N_0)$.  Algorithmically, we have that the ONMC approach is defined as follows.
\vspace{-5pt}
\begin{algorithm}[h]
	\small
	\caption{Online Nested Monte Carlo \label{alg:onmc}}
	\setstretch{1}
	\begin{algorithmic}[1]
		\State $n_0\leftarrow 0, \quad J_0 \leftarrow 0$
		\While{true}
		\State $n_0\leftarrow n_0+1, \quad y_{n_0}^{(0)} \sim p(y^{(0)})$
		\State Construct $I_{1}\left(y^{(0)}_{n_0}, \tau_{1:D}(n_0)\right)$
		using $N_k = \tau_k(n_0)\, \forall k$
		\State $J_0 \leftarrow \frac{n_0-1}{n_0}J_0 + 
		f_0\big(y^{(0)}_{n_0}, I_{1}\big(y^{(0)}_{n_0}, \tau_{1:D}(n_0)\big)\big)$
		\EndWhile
	\end{algorithmic}
\end{algorithm}
\vspace{-10pt}

We see that OMMC uses fewer samples at inner layers for earlier samples of
the outermost level,
and that each of resulting inner estimates is calculated as per an NMC estimator with a reduced sample budget.  
We now show the consistency of the ONMC estimator.  
\begin{restatable}{theorem}{conv}
	\label{the:conv}
	If each $\tau_k(n_0) \ge A \left(\log(n_0)\right)^\alpha, \forall n_0 > B$ for some constants $A,B,\alpha>0$ 
	and each $f_k$ is continuously differentiable,
	then the mean
	squared error of $J_0$ as an estimator for $\gamma_0$ converges to zero as $N_0\to\infty$.
\end{restatable}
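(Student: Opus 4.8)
The plan is to use the bias--variance decomposition $\E[(J_0-\gamma_0)^2] = \mathrm{Var}(J_0) + (\E[J_0]-\gamma_0)^2$ and show that each term vanishes as $N_0\to\infty$. The key structural observation is that, writing $J_0 = \frac{1}{N_0}\sum_{n_0=1}^{N_0} Z_{n_0}$ with $Z_{n_0} := f_0\bigl(y^{(0)}_{n_0}, I_1(y^{(0)}_{n_0}, \tau_{1:D}(n_0))\bigr)$, the ONMC construction never revisits earlier calls, so each $Z_{n_0}$ is assembled from an independent draw of $y^{(0)}_{n_0}$ together with its own fresh inner samples; hence the $Z_{n_0}$ are mutually independent. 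Moreover $Z_{n_0}$ is exactly a one--outer--sample NMC estimator of $\gamma_0$ with inner sample budget $\tau_{1:D}(n_0)$, so both its bias $b(n_0) := \E[Z_{n_0}] - \gamma_0$ and its mean squared error $\E[(Z_{n_0}-\gamma_0)^2]$ are controlled by the NMC bound~\eqref{eq:bound-cont} specialised to $N_0 = 1$ and $N_k = \tau_k(n_0)$.

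For the variance, independence gives $\mathrm{Var}(J_0) = \frac{1}{N_0^2}\sum_{n_0=1}^{N_0}\mathrm{Var}(Z_{n_0}) \le \frac{1}{N_0^2}\sum_{n_0=1}^{N_0}\E[(Z_{n_0}-\gamma_0)^2]$. Since each $\tau_k$ is monotonically increasing (so the inner budget at iteration $n_0$ is never smaller than at iteration $1$) and the $\varsigma_k^2$ are finite, the right-hand side of~\eqref{eq:bound-cont} yields a finite constant $M$ with $\E[(Z_{n_0}-\gamma_0)^2] \le M$ uniformly in $n_0$, and therefore $\mathrm{Var}(J_0) \le M/N_0 \to 0$.

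For the bias, $\E[J_0] - \gamma_0 = \frac{1}{N_0}\sum_{n_0=1}^{N_0} b(n_0)$. The NMC bias bound (the square root of the second summand of~\eqref{eq:bound-cont}, which does not involve $N_0$) gives $|b(n_0)| \le c\sum_{k=1}^{D} 1/\tau_k(n_0)$ up to asymptotically negligible terms, for a constant $c$ depending on the $K_d$, $C_d$ and $\varsigma_k^2$. Because $\tau_k(n_0) \ge A(\log n_0)^\alpha \to \infty$ for all $n_0 > B$, we have $|b(n_0)| \to 0$. Splitting the average at $n_0 = B$ — the first $B$ (finitely many) terms contribute $O(1/N_0)$, while the Cesàro average of the vanishing tail $|b(n_0)|$ also vanishes — shows $|\E[J_0] - \gamma_0| \to 0$. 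Combining the two bounds gives $\E[(J_0-\gamma_0)^2] \to 0$, as required.

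The main obstacle is making the uniform-in-$n_0$ control of $\mathrm{Var}(Z_{n_0})$ genuinely rigorous: one must check that the constants in~\eqref{eq:bound-cont} (the derivative bounds $K_d, C_d$, the $\varsigma_k^2$, and the implicit remainder) can be taken uniformly over the family of budgets $\{\tau_{1:D}(n_0)\}_{n_0}$, which is exactly where monotonicity of the $\tau_k$ and the finite--second--moment assumptions are needed; everything else is routine. It is worth noting that the hypothesis $\tau_k(n_0) \ge A(\log n_0)^\alpha$ enters only through its weaker consequence $\tau_k(n_0) \to \infty$ — for mere consistency no particular growth rate is required, since the Cesàro average of the per--iteration bias vanishes however slowly the $\tau_k$ grow; the specific logarithmic form matters instead for the quantitative rate comparisons made later.
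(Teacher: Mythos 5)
Your proposal is correct and follows essentially the same route as the paper: a bias--variance decomposition of $J_0$, independence of the per-iteration terms, control of each term's variance and bias via the NMC bound~\eqref{eq:bound-cont} with inner budgets $\tau_{1:D}(n_0)$, giving a variance of order $1/N_0$ and a bias equal to the Ces\`{a}ro average of per-iteration biases of order $1/\tau_k(n_0)\to 0$. The only (cosmetic) difference is in the last analytic step: the paper establishes the vanishing of that average via a condensation test and Kronecker's lemma after splitting at $B$, whereas you invoke the standard fact that the Ces\`{a}ro mean of a null sequence is null --- and your closing remark that only $\tau_k(n_0)\to\infty$ is needed for consistency matches the paper's own comment following the theorem.
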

In other words, ONMC converges for any realistic choice of $\tau_k(n_0)$ provided
$\lim_{n_0\to\infty} \tau_k(n_0) = \infty$: the requirements on $\tau_k(n_0)$
are, for example, much weaker than requiring a logarithmic or faster
rate of growth, which would already be an impractically slow rate of increase.
%In other words, ONMC converges provided the $\tau_k(n_0)$
%increase at a logarithmic or faster rate.  Though some sub-logarithmic schedules
%still convergence, not all do.  For example, $\tau_k (n_0) = \sqrt{\log n_0}$ and
%slower rates do not.

In the case where $\tau_k(n_0)$ increases at a
polynomial rate, we can further quantify the rate of convergence, along with the relative
variance and bias compared to NMC:
\begin{restatable}{theorem}{rate}
	\label{the:rate}
	If each $\tau_k(n_0) \ge A n_0^{\alpha}, \, \forall n_0 > B$ for some constants 
	$A,B,\alpha > 0$ and each $f_k$ is continuously differentiable, then
	\begin{align}
	&\hspace{-4pt}\E \left[\left(J_0-\gamma_0\right)^2\right] \le
	\frac{\varsigma_0^2}{N_0}+
	\left(\frac{\beta g(\alpha,N_0)}{A N_0^\alpha}\right)^2+O(\epsilon),
	\label{eq:onmc-rate} \displaybreak[0]\\
	&\text{where} \quad g(\alpha,N_0) = \begin{cases}
	1/(1-\alpha),  &\alpha<1 \\
	\log (N_0)+\eta,  &\alpha=1 \\
	\zeta(\alpha) N_0^{\alpha-1}, &\alpha>1
	\end{cases}; \displaybreak[0]\\
		&\quad \quad \beta = \frac{C_0 \varsigma_{1}^2}{2}
		+\sum_{k=0}^{D-2}  \left(\prod_{d=0}^{k} K_{d}\right)
		\frac{C_{k+1} \varsigma^2_{k+2}}{2}; \label{eq:beta}
	\end{align}
	$\eta\approx 0.577$ is the Euler--Mascheroni constant;
	$\zeta$ is the Riemann--zeta function; and
	$C_k$, $K_k$, and $\varsigma_k$ are constants defined
	as per the corresponding NMC bound given in~\eqref{eq:bound-cont}.
%	; and 
%	we reiterate the convention that $O(\epsilon)$ is used to represent asymptotically dominated terms.
%	where $H_{2\alpha}[N_0] := \sum_{n_0}^{N_0} n_0^{-2\alpha}$ is the $N_0$-th generalized harmonic number of order  $2\alpha$.
\end{restatable}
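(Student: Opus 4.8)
The plan is to bound $\E[(J_0-\gamma_0)^2]$ by a bias--variance decomposition, feeding in the per-iteration form of the NMC result that underlies \eqref{eq:bound-cont}. The $N_0$ summands of $J_0$ in \eqref{eq:onmc} are independent --- each outermost draw $y^{(0)}_{n_0}$ is i.i.d.\ and each inner estimate $I_1(y^{(0)}_{n_0},\tau_{1:D}(n_0))$ is built from fresh independent samples --- though not identically distributed, since the inner budgets $\tau_k(n_0)$ grow with $n_0$. Writing $b(n_0)$ and $v(n_0)$ for the bias and variance of the single term $f_0(y^{(0)},I_1(y^{(0)},\tau_{1:D}(n_0)))$, we have $\E[J_0]-\gamma_0=\frac{1}{N_0}\sum_{n_0=1}^{N_0}b(n_0)$ and $\operatorname{Var}(J_0)=\frac{1}{N_0^2}\sum_{n_0=1}^{N_0}v(n_0)$, so the problem reduces to controlling $b(n_0)$ and $v(n_0)$ and summing.

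The key input is that the bound \eqref{eq:bound-cont} of \cite[Theorem 3]{rainforth2017pitfalls} is itself obtained by pulling the $1/N_0$ factor out of an estimator whose bias does not depend on $N_0$; the same Taylor-expansion argument applied to a single outermost draw with inner budgets $M_1,\dots,M_D$ gives a bias bounded in magnitude by $\frac{C_0\varsigma_1^2}{2M_1}+\sum_{k=0}^{D-2}\left(\prod_{d=0}^k K_d\right)\frac{C_{k+1}\varsigma^2_{k+2}}{2M_{k+2}}$ up to asymptotically dominated terms, and a variance of the form $\varsigma_0^2+O(1/M_1)$. Substituting $M_k=\tau_k(n_0)\ge An_0^\alpha$ for $n_0>B$ and reading off $\beta$ from \eqref{eq:beta} yields $|b(n_0)|\le\beta/(An_0^\alpha)+O(\epsilon)$ and $v(n_0)\le\varsigma_0^2+O(n_0^{-\alpha})$; for the finitely many $n_0\le B$, continuous differentiability together with finiteness of the $\varsigma_k^2$ guarantees that $b(n_0)$ and $v(n_0)$ are merely bounded.

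Summing the biases, $|\E[J_0]-\gamma_0|\le\frac{1}{N_0}\left(O(1)+\frac{\beta}{A}\sum_{n_0=1}^{N_0}n_0^{-\alpha}\right)+O(\epsilon)$, and the elementary asymptotics $\sum_{n=1}^{N_0}n^{-\alpha}=\frac{N_0^{1-\alpha}}{1-\alpha}(1+o(1))$ for $\alpha<1$, $=\log N_0+\eta+o(1)$ for $\alpha=1$, and $=\zeta(\alpha)+o(1)$ for $\alpha>1$, show that in every case $\frac{1}{N_0}\sum_{n=1}^{N_0}n^{-\alpha}=\frac{g(\alpha,N_0)}{N_0^\alpha}(1+o(1))$ with $g$ exactly as stated. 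Squaring the total bias and noting that, apart from the leading $\left(\beta g(\alpha,N_0)/(AN_0^\alpha)\right)^2$ term, the remaining contributions (involving the $O(1)$ initial-iteration remainder) are $O(1/N_0^2)$ and $o(1/N_0)$ and hence dominated by $\varsigma_0^2/N_0$, gives $(\E[J_0]-\gamma_0)^2\le\left(\beta g(\alpha,N_0)/(AN_0^\alpha)\right)^2+O(\epsilon)$. For the variance, $\operatorname{Var}(J_0)=\frac{1}{N_0^2}\sum_{n_0=1}^{N_0}v(n_0)\le\frac{\varsigma_0^2}{N_0}+\frac{1}{N_0^2}O\left(\sum_{n_0=1}^{N_0}n_0^{-\alpha}\right)=\frac{\varsigma_0^2}{N_0}+O(\epsilon)$. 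Adding the two bounds gives \eqref{eq:onmc-rate}.

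The main obstacle is the second step: making precise that the NMC bound factorises term by term even when the inner budgets differ across summands, so that the per-term bias is controlled by the $\beta$-expression of \eqref{eq:beta} and the per-term variance by $\varsigma_0^2$ plus an $O(1/\tau_1(n_0))$ correction, and then checking that the higher-order remainders, once accumulated over all $N_0$ iterations, stay asymptotically dominated by the two named terms --- regardless of whether $\alpha$ is below or above $1/2$, which is what determines which of $\varsigma_0^2/N_0$ and the bias-squared term is ultimately dominant. The remaining pieces, namely the bias--variance split and the summation asymptotics for $\sum n^{-\alpha}$, are routine.
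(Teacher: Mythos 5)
Your proposal is correct and follows essentially the same route as the paper: the paper likewise decomposes the MSE into variance plus squared bias over the independent-but-not-identical outermost terms, substitutes the per-iteration NMC bias/variance bounds from \eqref{eq:bound-cont} with $N_k=\tau_k(n_0)$ (this is equation \eqref{eq:onmc-general} established in the proof of Theorem~\ref{the:conv}), splits off the finitely many $n_0\le B$ terms, and evaluates $\sum_{n_0}n_0^{-\alpha}$ via generalized harmonic numbers and Euler--Maclaurin to obtain $g(\alpha,N_0)$. The "obstacle" you flag about the per-term factorization of the NMC bound is handled at the same level of rigor in the paper, so no substantive difference remains.
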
\vspace{-6pt}

\begin{restatable}{corollary}{relative}
	\label{the:relative-rate}
	Let $J_0$ be an ONMC estimator setup as per Theorem~\ref{the:rate} with $N_0$ outermost samples and
	let $I_0$ be an NMC estimator with a matched overall sample budget.
	Defining $c = (1+\alpha D)^{(-1/(1+\alpha D))}$, then
	\begin{align*}
	\var [J_0] &\to c \var [I_0] \quad \text{as} \quad N_0\to \infty.
	\end{align*}
	Further, if the NMC bias decreases at a rate proportional to that implied by the bound given
	in~\eqref{eq:bound-cont}, namely
	\begin{align}
	\left|\E [I_0-\gamma_0]\right| = \frac{b}{M_0^{\alpha}} +O(\epsilon)
	\label{eq:bias-assumpt}
	\end{align}
	for some constant
%	\footnote{The bound gives $b=\beta/A$, but this is not a necessary for the
%		proof: we only need the order of the true rate to be the same.} 
	$b>0$, where $M_0$ is the number of outermost samples used by the NMC sampler, then
	\begin{align*}
%	\left|\E [J_0-\gamma_0]\right| &\le c^{\alpha}N_0^{\alpha-1}\sqrt{g(\alpha,N_0)}\left|\E [I_0-\gamma_0]\right| +O(\epsilon') \\
	\left|\E [J_0-\gamma_0]\right| &\le c^{\alpha} g(\alpha,N_0)\left|\E [I_0-\gamma_0]\right| +O(\epsilon).
%	 \displaybreak[0]\\
%	\text{where}\quad h(\alpha,N_0) &=\begin{cases}
%	 1/(1-\alpha),  &\alpha<1 \\
%	 \log (N_0)+\eta,  &\alpha=1 \\
%	 \zeta(\alpha) N_0^{\alpha-1}, &\alpha>1.
%	 \end{cases}
	\end{align*}
%	We note that $c<1$ for all $D>0$ such that ONMC has lower asymptotic 
%	variance.
\end{restatable}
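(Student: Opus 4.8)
The plan is to reduce the whole statement to one budget-matching identity and then read the leading-order variance and bias terms off Theorem~\ref{the:rate} and the NMC bound~\eqref{eq:bound-cont}. First I would make ``matched overall sample budget'' precise. With the inner schedules growing at rate $\tau_k(n_0)\asymp An_0^{\alpha}$ (the natural choice; faster growth is handled identically with heavier notation), the leaf-level cost of ONMC after $N_0$ outer iterations is $T_{\mathrm{ONMC}}=\sum_{n_0=1}^{N_0}\prod_{k=1}^{D}\tau_k(n_0)\sim A^{D}\sum_{n_0=1}^{N_0}n_0^{\alpha D}\sim \tfrac{A^{D}}{1+\alpha D}N_0^{\,1+\alpha D}$, whereas an NMC estimator with $M_0$ outer samples and the matched inner budgets $\tau_k(M_0)$ costs $T_{\mathrm{NMC}}=M_0\prod_{k=1}^{D}\tau_k(M_0)\sim A^{D}M_0^{\,1+\alpha D}$. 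Equating the two and solving for $M_0$ gives $M_0=(1+\alpha D)^{-1/(1+\alpha D)}N_0=cN_0$, which is the source of the constant $c$; note $c<1$, which already explains ONMC's variance advantage.

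For the variance claim I would show $\var[J_0]=\varsigma_0^2/N_0+O(\epsilon)$ and $\var[I_0]=\varsigma_0^2/M_0+O(\epsilon)$, whence $\var[J_0]/\var[I_0]\to M_0/N_0=c$. Since the outer samples are independent, $\var[J_0]=N_0^{-2}\sum_{n_0=1}^{N_0}\var\bigl[f_0(y^{(0)}_{n_0},I_1(y^{(0)}_{n_0},\tau_{1:D}(n_0)))\bigr]$; a first-order Taylor expansion of $f_0$ about $\gamma_1(y^{(0)}_{n_0})$ --- the one underlying~\eqref{eq:bound-cont} --- shows each summand equals $\varsigma_0^2+O(\tau_1(n_0)^{-1})=\varsigma_0^2+O(n_0^{-\alpha})$, so $\var[J_0]=\varsigma_0^2/N_0+N_0^{-2}O\bigl(\sum_{n_0}n_0^{-\alpha}\bigr)=\varsigma_0^2/N_0+O(\epsilon)$, the correction being $O(N_0^{-1-\alpha})$ (or smaller when $\alpha\ge1$) and hence asymptotically dominated. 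The same computation with the constant inner budget $\tau_k(M_0)$ gives $\var[I_0]=\varsigma_0^2/M_0+O(\epsilon)$, and substituting $M_0=cN_0$ completes this part.

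For the bias I would exploit that both estimators are averages of the same per-outer-iteration biased summand: writing $b(n):=\E\bigl[f_0\bigl(y^{(0)},I_1(y^{(0)},\tau_{1:D}(n))\bigr)\bigr]-\gamma_0$, we have $\E[J_0-\gamma_0]=\tfrac1{N_0}\sum_{n=1}^{N_0}b(n)$ while $\E[I_0-\gamma_0]=b(M_0)$. Assumption~\eqref{eq:bias-assumpt} (read as holding across budgets) pins down $b(n)=b\,n^{-\alpha}+O(\epsilon)$, consistent with the upper bound $|b(n)|\le\beta/(An^{\alpha})$ implied by~\eqref{eq:bound-cont}. Evaluating the partial sum in the three regimes $\alpha<1$, $\alpha=1$, $\alpha>1$ gives $\tfrac1{N_0}\sum_{n=1}^{N_0}n^{-\alpha}\le g(\alpha,N_0)/N_0^{\alpha}$ up to dominated terms --- this is precisely the definition of $g$, accounting for the $1/(1-\alpha)$, $\log N_0+\eta$, and $\zeta(\alpha)N_0^{\alpha-1}$ cases --- so $|\E[J_0-\gamma_0]|\le b\,g(\alpha,N_0)/N_0^{\alpha}+O(\epsilon)$. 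Since $\E[I_0-\gamma_0]=b/M_0^{\alpha}=b/(c^{\alpha}N_0^{\alpha})+O(\epsilon)$, dividing yields $|\E[J_0-\gamma_0]|\le c^{\alpha}g(\alpha,N_0)\,|\E[I_0-\gamma_0]|+O(\epsilon)$. The main obstacle is the first step: one must argue carefully that treating $\tau_k(n_0)$ as growing exactly like $An_0^{\alpha}$ does not corrupt the stated constants, and that the discrete sums $\sum n_0^{\alpha D}$ and $\sum n_0^{-\alpha}$ are captured, to within dominated terms, by their integral analogues in all three exponent regimes; once $M_0=cN_0$ is established, the variance and bias statements are routine bookkeeping on Taylor remainders already controlled in Theorems~\ref{the:conv} and~\ref{the:rate}.
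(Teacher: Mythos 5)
Your proposal is correct and follows essentially the same route as the paper: matching budgets via the asymptotics of $\sum_{n_0} n_0^{\alpha D}$ (the generalized harmonic number) to obtain $M_0 \to c N_0$, then reading off the asymptotic variance $\varsigma_0^2/N_0$ and averaging the per-iteration bias $b\,n_0^{-\alpha}$ over $n_0$, with the three regimes of $g(\alpha,N_0)$ arising from the partial sums of $n_0^{-\alpha}$. Your explicit reading of~\eqref{eq:bias-assumpt} as applying at each intermediate budget matches how the paper uses it, so no gap remains.
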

We expect the assumption that the bias scales 
as $1/M_0^{\alpha}$ to be satisfied in the vast majority of scenarios, but there
may be edge cases, e.g. when an $f_k$ gives a constant output, 
for which faster rates are observed.
Critically, the assumption holds for all nested inference problems
because the rate given in~\eqref{eq:nest-inf-rate} is an equality.

We see that if $\alpha<1$, which will generally be the case in
practice for sensible setups, then the convergence rates for
ONMC and NMC vary only by a constant factor.  Specifically, for a fixed
value of $N_0$, they have the same asymptotic variance and ONMC has
a factor of $1/(1-\alpha)$ higher bias.  However, the cost of ONMC
is (asymptotically) only $c<1$ times that of NMC, so for a fixed overall
sample budget it has lower variance.  

As the bound varies only
in constant factors for $\alpha<1$, the asymptotically optimal value for $\alpha$ for ONMC
is the same as that for NMC, namely $\alpha=0.5$~\citep{rainforth2017pitfalls}.  
For this setup, we have $c\in
\{0.763,0.707,0.693,0.693,0.699,1\}$ respectively for $D \in \{1,2,3,4,5,
\infty\}$.  Consequently, when $\alpha=0.5$, the 
fixed budget variance of ONMC is always better than NMC, while the
bias is no more than $1.75$ times larger if
$D\le13$ and no more than $2$ times large more generally.

\subsection{EMPIRICAL CONFIRMATION}

To test ONMC empirically, we consider the simple analytic model given in Appendix~\ref{sec:simple-model},
setting $\tau_1 (n_0) = \max(25,\sqrt{n_o})$. The rationale for setting a minimum value of $N_1$ is to minimize the burn-in effect of ONMC -- earlier samples will have larger
bias than later samples
and we can mitigate this by ensuring a minimum value for $N_1$.  More generally, we recommend setting
(in the absence of other information)
$\tau_1(n_0) = \tau_2(n_0) = \dots = \tau_D (n_0) = \max(T_{\min}^{1/3},\sqrt{n_0})$, where $T_{\min}$ is
the minimum overall budget we expect to spend.  In Figure~\ref{fig:onmc}, we have chosen to set $T_{\min}$
deliberately low so as to emphasize the differences between NMC and ONMC.  Given our value for $T_{\min}$,
the ONMC approach is identical to fixing $N_1 = 25$ for $T<25^3=15625$, but unlike fixing $N_1$, it continues to
improve beyond  this because it is not limited by asymptotic bias.  Instead, we see an inflection point-like
behavior around $T_{\min}$, with the rate recovering to effectively match that of the NMC estimator.  
%We note that after $T=10^7$ 
%samples, the ratio of the ONMC variance and bias to those of NMC are reasonably
% close to the 
%asymptotic values from Theorem~\ref{the:relative-rate}, 
%at roughly $0.94$ and $1.67$ respectively.

\begin{figure}[t]
	\centering
	{\includegraphics[width=0.95\textwidth,trim={0.5cm 0 3.5cm 2cm},clip]{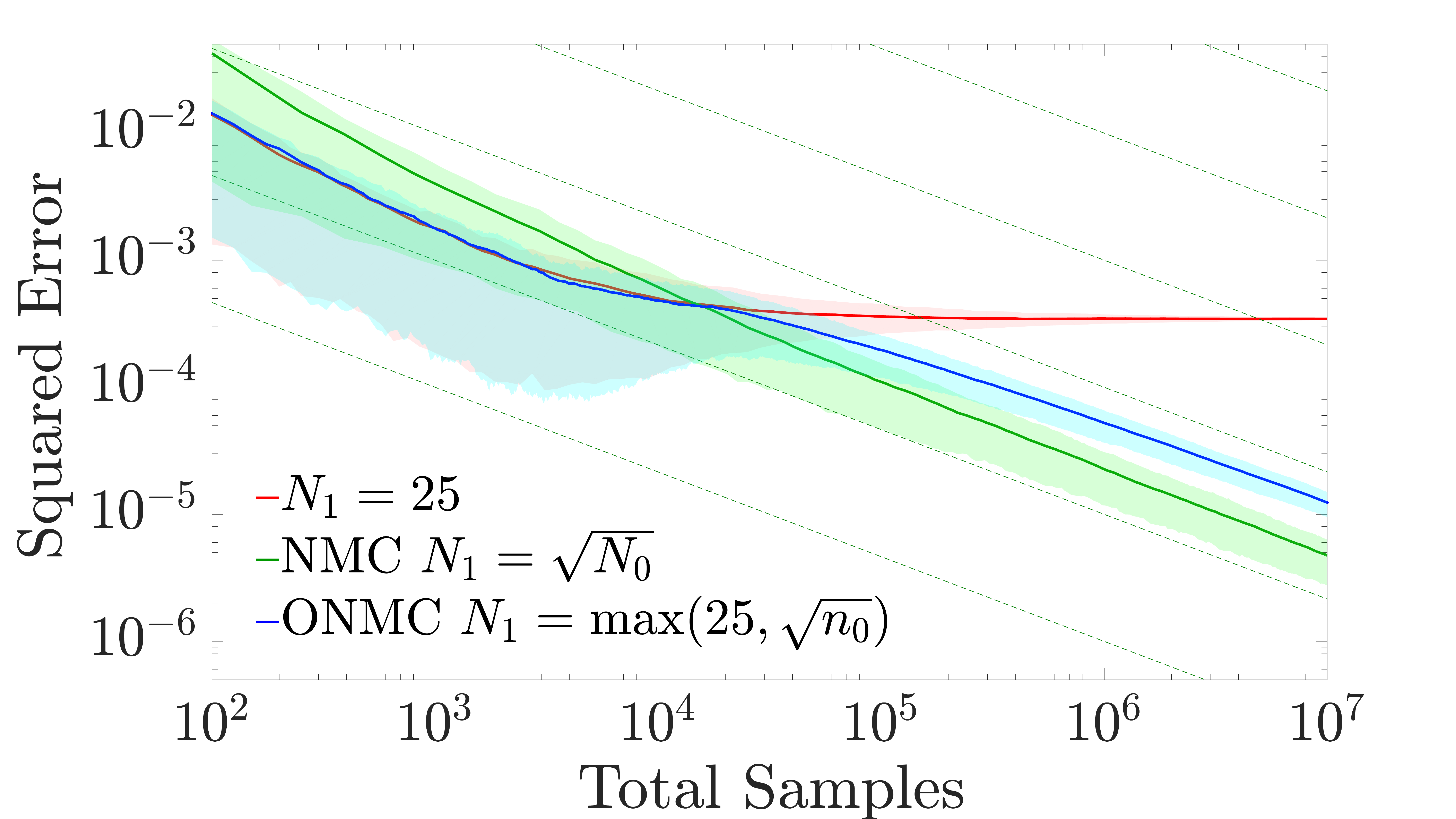}}
	\vspace{-8pt}
	\caption{Convergence of ONMC, NMC, and fixed $N_1$.  Results are averaged over $1000$ runs, with 
		solid lines showing the mean and shading the 25-75\% quantiles.  The theoretical
		rates for NMC are shown by the dashed lines.
		\label{fig:onmc}		\vspace{-8pt}}
\end{figure}

\subsection{USING ONMC IN PPSs}

Using ONMC based estimation schemes to ensure consistent estimation for nested inference
in PPSs is straightforward -- the number of iterations
the outermost query has been run for is stored and used 
to set the number of iterations used for
the inner queries.  In fact, even this minimal level of communication is
not necessary -- $n_0$
can be inferred from the number of times we have previously run inference on the current query, the current depth $k$,
and $\tau_{1}(\cdot),\dots,\tau_{k-1}(\cdot)$. 

As with NMC, for nested inference problems ONMC can either return a single sample
from each call of a nested query, or Rao--Blackwellize the drawing of this sample when
possible.  Each respectively produces an estimator analogous to~\eqref{eq:emp-measure-nonRB} 
and~\eqref{eq:emp-measure} respectively, except that $N_1$ in the definition of the
inner weights is now a function of $n$.

%
%we can either return a single sample from 
%When using properly weighted~\citep{naessethLS2015nested} methods such as importance sampling or SMC,
%inner queries can either return a full set of weighted samples each time
%they are called or generate this full set of weighted samples, but only return one drawn in
%proportion to the inner query weights $\pi_i(y_n,z_{n,\ell})/q(z_{n,\ell} | y_n)$.
%For the former case, the weights in the outer query are unchanged from~\eqref{eq:weights}, but $N_1$
%is now a function of $n$.  For the latter, these weights are now
%$w_n = \psi(x_n,y_n,z_n)/q(x_n,y_n)$ where $z_n$ is the returned sample.  This latter
%case is a strictly inferior estimator, but will often be necessary in practice because of the
%semantical complications associated with, for example, returning multiple samples from
%a single \sample draw in Anglican.

Returning to Figure~\ref{fig:hists}, we see that using ONMC with nested importance sampling
and only returning a single sample corrects the previous issues with how Anglican deals
with nested inference, producing samples indistinguishable from the reference code.

%\vspace{2pt}

\section{CONCLUSIONS}

%\vspace{2pt}

We have formalized the notion of nesting probabilistic program queries and investigated
the statistical validity of different categories of nesting.  We have found that current
systems tend to use methods that lead to asymptotic bias for nested inference problems, but that
they are consistent for nested conditioning.  We have shown how to carry out the former
in a consistent manner
and developed a new \emph{online} estimator that simplifies the construction algorithms
that satisfy the conditions required  for convergence.
%
%We have demonstrated that, without access to exact sampling methods, then asymptotic bias is, in general,
%unavoidable when sampling from nested probabilistic program queries.  Consequently, we see that the computational
%effort expended at each level of the nesting must increase to ensure consistency, a requirement not satisfied
%by existing systems.  On the other hand, we have shown that observing the output of one query inside another
%query as a likelihood term is statistically valid.

\newpage
\appendix

% !TEX root =  appendix.tex

\section{PROOFS}
\label{sec:proofs}

%We consider the convergence of the marginal likelihood estimate,
%convergence for expectations calculated using the empirical measure
%follow by equivalent arguments and the use of Slutsky's Theorem.

\nestinf*
\begin{proof}
Though informally the high-level result follows directly from \citep[Theorem 3]{rainforth2017pitfalls},
there are three subtleties that require further attention.
Firstly, unlike~\citep[Theorem 3]{rainforth2017pitfalls}, this result is an asymptotic equality
rather than a bound -- in the limit of large $N_0, N_1$ it holds exactly.  
This more powerful result is made possible by knowing the
exact form of the nonlinearity.
Secondly, our overall estimator uses the ratio of two NMC estimators.
Though Slutsky's Theorem means this does not create complications in the general 
demonstration of convergence, additional care is required when calculating the exact rate.
Finally, samples are reused in both the inner and outer estimators.
This could easily be avoided by sampling an additional $z$ for the outer estimator, thereby giving
an estimator trivially of the form considered by \citep[Theorem 3]{rainforth2017pitfalls}.  However, doing so
would be less efficient and is expected to have a larger variance
than the estimator used.

We start by considering the the partition function estimate, noting that true value is
$Z=\iiint \pi_o (x,y,z) dxdydz$,
\begin{align}
	\hat{Z} &= \frac{1}{N_0}\sum_{n=0}^{N_0}
	\cfrac{\cfrac{1}{N_1}\sum\limits_{m=1}^{N_1}
		\cfrac{\psi(x_n,y_n,z_{n,m}) \pi_i (y_n,z_{n,m})}{q (x_n,y_n,z_{n,m})}}
		{\cfrac{1}{N_1}\sum\limits_{m=1}^{N_1} \cfrac{\pi_i (y_n,z_{n,m})}{q (z_{n,m}|y_n)}}	\label{eq:ml} \displaybreak[0] \\
		&=
	\frac{1}{N_0}\sum_{n=0}^{N_0}
	\frac{\frac{1}{N_1}\sum_{m=1}^{N_1} v_{n,m}}
	{\frac{1}{N_1}\sum_{m=1}^{N_1} u_{n,m}} \displaybreak[0] 
\end{align}
where 
\begin{align}
\label{eq:u}
u_{n,m} &= \frac{\psi(x_n,y_n,z_{n,m}) \pi_i (y_n,z_{n,m})}{q (x_n,y_n,z_{n,m})} \quad \text{and}\displaybreak[0]\\
\label{eq:v}
v_{n,m} &= \frac{\pi_i (y_n,z_{n,m})}{q (z_{n,m}|y_n)}
\end{align}
will be used as shorthands.  Further
defining 
\begin{align}
\label{eq:piy}
\pi_i(y_n) = \int &\pi_i(y_n,z) d z, \displaybreak[0]\\
\label{eq:U}
V_n = \frac{1}{N_1}\sum_{m=1}^{N_1} v_{n,m}, \,\,\,
&\text{and} \,\,\, U_n = \frac{1}{N_1}\sum_{m=1}^{N_1} u_{n,m}, \displaybreak[0]
\end{align}
and using Taylor's Theorem on $1/U_n$ about $\pi_i(y_n)$ gives
\begin{align}
\begin{split}
&\hat{Z} = O(\epsilon)+	\frac{1}{N_0}\sum_{n=0}^{N_0}
\frac{V_n}{\pi_i(y_n)} \\
&\quad\times \left(1+\frac{\pi_i(y_n)-U_n}{\pi_i(y_n)}+
\left(\frac{\pi_i(y_n)-U_n}{\pi_i(y_n)}\right)^2\right)
\label{eq:z-expan}
\end{split}
\end{align}	
provided each $U_n, \pi(y_n) \neq 0$ to avoid singularity issues.  
We have by assumption that 
$\pi(y_n)\neq0$ for all possible $y_n$ as otherwise the problem becomes
ill-defined.  On the other hand, if $U_n=0$, it must also
be the case that $V_n=0$.  Here by taking the convention $V_n/U_n =0$ when $U_n=V_n=0$,
we can avoid all further possible singularity issues, such
that~\eqref{eq:z-expan} always holds.

Meanwhile, the standard breakdown
of the mean squared error to the variance plus the bias squared gives
\begin{align*}
\E \left[\left(\hat{Z}-Z\right)^2\right] =
\var \left[\hat{Z}\right] +\left(\E \left[\hat{Z}-Z\right]\right)^2.
\end{align*}
Using~\eqref{eq:z-expan}, we see that the first term in
the expansion dominates for the variance (as $\pi_i(y_n)-U_n$ decreases with $N_1$), such that the
weak law of large numbers gives
\begin{align*}
\var \left[\hat{Z}\right]  = \frac{1}{N_0} \var \left[
\frac{V_1}{\pi_i(y_1)}\right] + O(\epsilon).
\end{align*} 
Now we have
\begin{align*}
V_1
&= \E [v_{1,1} | x_1, y_1] +\frac{1}{N_1}\sum_{m=1}^{N_1} \left(v_{1,m}-
\E [v_{1,m} | x_1, y_1]\right)
\end{align*}
 and we further see from the weak law of large numbers
 that the second term tends to $0$ as $N_1$ increases, but
 the first term remains fixed.  Thus the first term is
 dominant and we have
 \begin{align}
&\var \left[\hat{Z}\right] =
\frac{1}{N_0} \var \left[
\frac{\E [v_{1,1} | x_1, y_1] }{\pi_i(y_1)}\right] + O(\epsilon) 
\displaybreak[0]\\
&=
\frac{1}{N_0} \var \left[
\frac{
		\int \psi(x_1,y_1,z) \pi_i (y_1,z) dz
	}{q(x_1,y_1)\pi_i(y_1)}\right] + O(\epsilon) \displaybreak[0]\\
&=
\frac{\sigma_z^2}{N_0}  + O(\epsilon) \displaybreak[0]
 \end{align}
 where
 \begin{align}
 \label{eq:sigmaz}
 \sigma_z^2 = \var \left[\cfrac{
 	\int 
 	\pi_o(x_1,y_1,z) dz
 }{q(x_1,y_1)}\right].
 \end{align}
%Thus we have that the variance decreases as $O(1/N_0)$
%with constant as above. 

Switching focus to the bias we have
\begin{align*}
&\E \left[\hat{Z}-Z\right]
=  O(\epsilon)+ \E \bigg[
\left( \frac{V_1}{\pi_i(y_1)}\right)\\
&\quad\quad\quad\times \left(\frac{\pi_i(y_1)-U_1}{\pi_i(y_1)}+
\left(\frac{\pi_i(y_1)-U_1}{\pi_i(y_1)}\right)^2\right)\bigg]  \\
&\quad= O(\epsilon)+ \E \bigg[ \E \bigg[
\left( \frac{v_{1,1}}{\pi_i(y_1)}\right)\\
&\quad\quad\quad\times \left(\frac{\pi_i(y_1)-U_1}{\pi_i(y_1)}+
\left(\frac{\pi_i(y_1)-U_1}{\pi_i(y_1)}\right)^2\right) \bigg| 
y_1 \bigg] \bigg].
\end{align*}
 For the first order term in the expansion, only the component
 with respect to $u_{1,1}$ is non-zero as, for $m\neq1$, 
\begin{align}
\label{eq:bias_zero}
\begin{split}
\E [v_{1,1} & \left(\pi_i(y_1)-u_{1,m}\right) | y_1]= \\
&\E [v_{1,1} | y_1]
\E [\left(\pi_i(y_1)-u_{1,m}\right) | y_1] = 0.
\end{split}
\end{align}
Denoting the first order term as $T_1$, we thus have
\begin{align*}
& T_1 = \E \left[\frac{v_{1,1} \left(\frac{1}{N_1} \sum_{m=1}^{N_1} \pi_i(y_1)- u_{1,m}\right)}{
	\left(\pi_i(y_1)\right)^2}\right]\\
 &=\frac{1}{N_1} \left(\E \left[\frac{v_{1,1}}{\pi_i(y_1)}\right] - \E \left[\frac{v_{1,1}u_{1,1}}{
	\left(\pi_i(y_1)\right)^2}\right] \right)\\
&\hspace{-3pt}= \frac{1}{N_1}
\left(Z-\iiint \frac{\psi(x,y,z)\left(\pi_i(y,z)\right)^2}
{q(z|y) \left(\int \pi_i(y,z') dz'\right)^2} dxdydz\right) \\
&\hspace{-3pt}= \frac{1}{N_1}
\left(Z-\iiint \frac{\pi_o(x,y,z)p_i(z|y)}{q(z|y)} dxdydz\right).
\end{align*}
For the second order term, $T_2$, components of $u_{1,m}$ for $m\neq1$ are
no longer zero as follows
% However, using an argument akin to~\eqref{eq:bias_zero} 
% to eliminate all cross terms from the sum, while noting that the 
% marginal distribtion on each $u_{1,m}$ is the same, gives
\begin{align*}
&T_2 = \E \left[ \E \left[\frac{v_{1,1}}{\pi_i(y_1)}
 \left(\frac{1}{N_1}\sum_{m=1}^{N_1} \frac{\pi_i(y_1)-u_{1,m}}{\pi_i(y_1)}\ \right)^2
 \middle|  y_1 \right] \right] \displaybreak[0] \\
 &= \frac{1}{N_1^2}
 \E \left[\frac{v_{1,1} \left(\pi_i(y_1)-u_{1,1}\right)^2}{
 	\left(\pi_i(y_1)\right)^3}\right] +  \frac{1}{N_1^2}\E \Bigg[\frac{1}{\left(\pi_i(y_1)\right)^3} \times\\
& \E \Bigg[v_{1,1} \sum_{m=2}^{N_1}
 \sum_{\ell=1}^{N_1}\left(
 \pi_i(y_1)-u_{1,m} \right)\left(
 \pi_i(y_1)-u_{1,m} \right)\Bigg| y_1 \Bigg]\Bigg],\displaybreak[0] \\
 \intertext{now using an argument akin to~\eqref{eq:bias_zero} shows that
 	terms for which $m\neq\ell$ are all zero.  Further noticing that the first term
 	is asympotitically dominated gives}
  &= O(\epsilon)+  \\
  &\,\,\,\, \frac{1}{N_1^2}\E \Bigg[\frac{1}{\left(\pi_i(y_1)\right)^3}\E \Bigg[v_{1,1} \sum_{m=2}^{N_1}\left(
  \pi_i(y_1)-u_{1,m} \right)^2\Bigg| y_1 \Bigg]\Bigg],\displaybreak[0] \\
  &= O(\epsilon)+  \left(\frac{N_1-1}{N_1^2}\right) \times \\
  &\,\,\,\, \E \left[\E \left[\frac{v_{1,1}}{\pi_i(y_1)}\middle| y_1 \right] \E \left[ \left(
  \frac{\pi_i(y_1)-u_{1,1}}{\pi_i(y_1)} \right)^2\middle| y_1 \right]\right],\displaybreak[0] \\  
  &= O(\epsilon)+\\
   &\,\,\,\, \frac{1}{N_1} \E \left[\cfrac{
  	\int 
  	\pi_o(x,y_1,z) dxdz
  }{q(y_1)} \, \var \left[\frac{u_{1,1}}{\pi_i(y_1)} \middle| y_1\right] \right],\displaybreak[0] \\  
  &= O(\epsilon)+\\
  &\,\,\,\, \frac{1}{N_1} \E \left[\cfrac{
  	\iint 
  	\pi_o(x,y_1,z) dxdz
  }{q(y_1)} \, \var \left[\frac{p_i(z_{1,1}|y_1)}{q(z_{1,1}|y_1)} \middle| y_1\right] \right],\displaybreak[0] 
%  \\        
%&= O(\epsilon)+ 
%Z\left(\frac{N_1-1}{N_1^2}\right)\E \left[ \left(
%\frac{\pi_i(y_1)-u_{1,1}}{\pi_i(y_1)} \right)^2\right]\displaybreak[0] \\
%&= \frac{Z}{N_1} \E \left[\var \left[\frac{\pi(y_1,z_{1,1})}
%{q(z_{1,1}|y_1)}\middle| y_1 \right] \right]+ O(\epsilon).
\end{align*}
Putting the bias terms together now gives
\begin{align}
\E \left[\hat{Z}-Z\right]
= \frac{\delta_z}{N_1} + O(\epsilon)
\end{align}
where
\begin{align}
\delta_z =& \iiint
	\pi_o(x,y,z)
 \var \left[\frac{p_i(z_{1,1}|y_1)}{q(z_{1,1}|y_1)} \middle| y_1=y\right] dxdydz \nonumber \\
&+Z-\iiint \frac{\pi_o(x,y,z)p_i(z|y)}{q(z|y)} dxdydz. \label{eq:deltaz}
\end{align}
%\begin{align}
%\label{eq:deltaz}
%\begin{split}
%\delta_z =& Z\left(1+\E \left[\var \left[\frac{\pi(y_1,z_{1,1})}
%{q(z_{1,1}|y_1)}\middle| y_1 \right] \right] \right)\\
%&-\iiint \frac{\pi_o(x,y,z)p_i(z|y)}{q(z|y)} dxdydz.
%\end{split}
%\end{align}

We thus have that the mean squared error is
\begin{align}
\E \left[\left(\hat{Z}-Z\right)^2\right] =
\frac{\sigma_z^2}{N_0} + \frac{\delta_z^2}{N_1^2}+O(\epsilon)
\end{align}
where $\sigma_z^2$ and $\delta_z$  a respectively defined in~\eqref{eq:sigmaz}
and~\eqref{eq:deltaz}.
 
If we now consider the estimator for the unnormalized target expectation
(i.e. the numerator in the self-normalized estimator), we see that
we can use the same arguments with $\psi(x,y,z)$ replaced
by $\psi(x,y,z) g(x,y,z)$.  Thus denoting this estimator as $\hat{G}$ and
its true value as $G=\gamma_0 Z$, we have
\begin{align}
\E \left[\left(\hat{G}-G\right)^2\right] =
\frac{\sigma_g^2}{N_0} + \frac{\delta_g^2}{N_1^2}+O(\epsilon)
\end{align}
where
\begin{align}
\label{eq:sigmag}
\sigma_g &= 
 \var \left[\cfrac{
	\int g(x_1,y_1, z)
	\pi_o(x_1,y_1,z) dz
}{q(x_1,y_1)}\right] \displaybreak[0] \\
\label{eq:deltag}
\begin{split}
\delta_g &= G-\iiint \frac{g(x,y,z) \pi_o(x,y,z)p_i(z|y)}{q(z|y)} dxdydz+\\
&
\iiint \pi_o(x,y,z) \var \left[\frac{p_i(z_{1,1}|y_1)}{q(z_{1,1}|y_1)} \middle| y_1=y\right] dxdydz.
\end{split}
\end{align}
%\label{eq:deltag}
%\begin{split}
%	\delta_g &= G\left(1+\E \left[\var \left[\frac{\pi(y_1,z_{1,1})}
%	{q(z_{1,1}|y_1)}\middle| y_1 \right] \right] \right)\\
%	&-\iiint \frac{g(x,y,z) \pi_o(x,y,z)p_i(z|y)}{q(z|y)} dxdydz.
%\end{split}
%\end{align}

Now the self-normalized estimator we actually use is
 $I_0 = \hat{G}/\hat{Z}$.  To assess this, we represent
 \begin{align}
 \hat{G} &= G+\frac{\delta_g}{N_1}+\frac{\sigma_g \xi_1}{\sqrt{N_0}} +O(\epsilon)\\
 \hat{Z} &= Z+\frac{\delta_z}{N_1}+\frac{\sigma_z \xi_2}{\sqrt{N_0}} +O(\epsilon)
 \end{align}
 where $\xi_1$ and $\xi_2$ are correlated random variables, each with
 mean zero and variance $1$ under their marginal distributions.  Now
 again using Taylor's theorem
\begin{align}
\frac{1}{\hat{Z}} &= \frac{1}{Z}\left(1+\frac{Z-\hat{Z}}{Z}\right)+ O(\epsilon)\nonumber \\
&= \frac{1}{Z}\left(1-\frac{1}{Z}\left(\frac{\delta_z}{N_1}+
\frac{\sigma_z \xi_2}{\sqrt{N_1}} \right)\right)+ O(\epsilon)
\end{align} 
where singularity issues are again dealt with because $Z\neq0$ by
our assumptions, noting $\hat{G}=0$ if $\hat{Z}=0$, and taking
the convention $\hat{G}/\hat{Z}=0$ whenever $\hat{G}=0$.  Thus
\begin{align*}
&I_0 =\frac{1}{Z^2}
\left(G+\frac{\delta_g}{N_1}+\frac{\sigma_g \xi_1}{\sqrt{N_0}}\right)
\hspace{-3pt}
 \left(Z-\frac{\delta_z}{N_1} - \frac{\sigma_z \xi_2}{\sqrt{N_0}}\right)
\hspace{-2pt}+\hspace{-2pt}O(\epsilon) \\
&=\gamma_0+\frac{\delta_g-\gamma_0\delta_z}{Z N_1}
+\frac{\sigma_g \xi_1-\gamma_0\sigma_z\xi_2}{Z \sqrt{N_0}}-
\frac{\sigma_g \xi_1 \sigma_z\xi_2}{Z^2 N_0} \hspace{-2pt}+\hspace{-2pt} O(\epsilon)
\end{align*}
Therefore,
\begin{align}
\var \left[I_0\right] = \cfrac{\sigma_g^2
+  \gamma_0^2\sigma_z^2 -2\sigma_g \gamma_0 \sigma_z \text{Cov}(\xi_1,\xi_2)}{Z^2 N_0}
+ O(\epsilon)
\end{align}
and
\begin{align}
\E \left[I_0-\gamma_0\right] =
\frac{\delta_g-\gamma_0\delta_z}{Z N_1}
-\frac{\sigma_g \sigma_z}{Z^2 N_0} \E \left[\xi_1 \xi_2\right]+ O(\epsilon)
\end{align}
and therefore
\begin{align}
\E &\left[\left(I_0 - \gamma_0\right)^2\right] = \frac{\sigma^2}{N_0}
+\frac{\delta^2}{N_1^2} + O(\epsilon)
\end{align}
where
\begin{align}
\sigma^2 &= \frac{\sigma_g^2
	+  \gamma_0^2\sigma_z^2 -2\sigma_g \gamma_0 \sigma_z \text{Cov}(\xi_1,\xi_2)}{Z^2} 
\label{eq:sigma-delta} \\
\text{and} \quad \delta &= \frac{\delta_g-\gamma_0\delta_z}{Z}.
\label{eq:delta}
\end{align}
A full characterization of $\text{Cov}(\xi_1,\xi_2)$ can further be calculated by considering
the full expansions for $\hat{G}$ and $\hat{Z}$.  Though we do not trawl through the
necessary algebra here, we note that $\text{Corr}(\xi_1,\xi_2) = 1$ if $g(x,y,z)$ is
constant, in which case we also have $\delta_g=\gamma_0 \delta$ and $\sigma_g^2 = \gamma_0^2 \sigma_z^2$
and so $\delta=\sigma^2=0$.  This is to be expected as, in this scenario, we have the
trivial estimator $I_0 = \gamma_0 = g(x,y,z) \, \forall x,y,z$.
\end{proof}

\begin{corollary}
	The un-Rao-Blackwellized form of the estimator given 
	in~\eqref{eq:emp-measure-nonRB}, whereby only
	a single sample is returned from the inner query sampled in
	proportion to its weight, also converges.  Specifically, it has the
	same rate of convergence for the bias, but has a constant factor
	increase in the variance.
\end{corollary}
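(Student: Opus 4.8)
The plan is to exploit the Rao--Blackwellization relationship noted in the main text. The un--Rao--Blackwellized estimator of~\eqref{eq:emp-measure-nonRB} uses the \emph{same} raw samples $\mathcal{F} = \{(x_n,y_n),(z_{n,m})_{m=1}^{N_1}\}_{n=1}^{N_0}$ as~\eqref{eq:emp-measure}, differing only in that, for each $n$, a single index $m^*(n)$ is drawn from the inner importance weights and only $z_{n,m^*(n)}$ is kept. Write $\hat{G}^* = \frac{1}{N_0}\sum_n w_n^* \, g(x_n,y_n,z_{n,m^*(n)})$ and $\hat{Z}^* = \frac{1}{N_0}\sum_n w_n^*$ for the corresponding unnormalized numerator and denominator estimators, so that $I_0^* = \hat{G}^*/\hat{Z}^*$. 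The first step is the direct computation showing that, conditional on $\mathcal{F}$, one has $\E[\hat{G}^* \mid \mathcal{F}] = \hat{G}$ and $\E[\hat{Z}^* \mid \mathcal{F}] = \hat{Z}$, where $\hat{G},\hat{Z}$ are exactly the estimators analysed in the proof of Theorem~\ref{the:nestinf}: this holds because the selection probability $P(m^*(n)=m\mid\mathcal{F}) \propto \pi_i(y_n,z_{n,m})/q(z_{n,m}\mid y_n)$, together with $w_n^* = \psi(x_n,y_n,z_{n,m^*(n)})/q(x_n,y_n)$, makes $\E[w_n^* \, g(x_n,y_n,z_{n,m^*(n)})\mid\mathcal{F}]$ equal to the $n$-th inner--averaged term $\frac{1}{N_1}\sum_m w_{n,m}\,g(x_n,y_n,z_{n,m})$ of $\hat{G}$, with $w_{n,m}$ as in~\eqref{eq:weights}.

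For the variance I would then apply the law of total variance, e.g. $\var[\hat{G}^*] = \E[\var[\hat{G}^*\mid\mathcal{F}]] + \var[\E[\hat{G}^*\mid\mathcal{F}]] = \E[\var[\hat{G}^*\mid\mathcal{F}]] + \var[\hat{G}]$. Since the draws $m^*(1),\dots,m^*(N_0)$ are independent given $\mathcal{F}$, the conditional variance decomposes as $\frac{1}{N_0^2}\sum_n \var[w_n^* g(x_n,y_n,z_{n,m^*(n)})\mid\mathcal{F}]$, each summand being almost surely finite under the stated integrability and valid--proposal assumptions; hence $\E[\var[\hat{G}^*\mid\mathcal{F}]] = O(1/N_0)$. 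Combining with $\var[\hat{G}] = \sigma_g^2/N_0 + O(\epsilon)$ from the main proof gives $\var[\hat{G}^*] = \sigma_g^{*2}/N_0 + O(\epsilon)$ with a constant $\sigma_g^{*2} \ge \sigma_g^2$, and likewise $\var[\hat{Z}^*] = \sigma_z^{*2}/N_0 + O(\epsilon)$ and an analogous inflation of the cross term.

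For the bias, conditional unbiasedness gives $\E[\hat{G}^*] = \E[\hat{G}] = G + \delta_g/N_1 + O(\epsilon)$ and $\E[\hat{Z}^*] = \E[\hat{Z}] = Z + \delta_z/N_1 + O(\epsilon)$, so the $O(1/N_1)$ bias coefficients $\delta_g,\delta_z$ are unchanged. Representing $\hat{G}^* = G + \delta_g/N_1 + \sigma_g^* \xi_1^*/\sqrt{N_0} + O(\epsilon)$ and $\hat{Z}^* = Z + \delta_z/N_1 + \sigma_z^* \xi_2^*/\sqrt{N_0} + O(\epsilon)$ and repeating verbatim the Taylor expansion of $I_0^* = \hat{G}^*/\hat{Z}^*$ about $(G,Z)$ used at the end of the proof of Theorem~\ref{the:nestinf}, the leading bias term is $(\delta_g-\gamma_0\delta_z)/(Z N_1) = \delta/N_1$, identical to the Rao--Blackwellized estimator, while the leading variance term $(\sigma_g^{*2} + \gamma_0^2\sigma_z^{*2} - 2\sigma_g^*\gamma_0\sigma_z^*\text{Cov}(\xi_1^*,\xi_2^*))/(Z^2 N_0)$ dominates its Rao--Blackwellized counterpart term by term; the remaining corrections are $O(1/N_0)$ in the bias, hence $o(1/N_1)$ and absorbed into $O(\epsilon)$. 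The degenerate case $\hat{Z}^*=0$ is handled with the same convention $I_0^*=0$ as in the main proof.

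The main obstacle is the bookkeeping needed to confirm that the additional randomness introduced by the discrete selection enters only at order $1/N_0$ and never couples with the $1/N_1$ bias terms in a way that would change the leading rate --- i.e. that the $\delta^2/N_1^2$ bias component is left intact while only the $O(1/N_0)$ variance component is inflated --- together with checking that the conditional second moments of $w_n^* g(\cdot)$ are finite, which reduces to verifying that integrability of $g$ and validity of the proposal suffice exactly as they do in Theorem~\ref{the:nestinf}.
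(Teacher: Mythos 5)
Your proposal is correct and takes essentially the same route as the paper: you establish that the Rao--Blackwellized estimator is the conditional expectation of the un--Rao--Blackwellized one given all the raw samples, deduce that the bias coefficients $\delta_g,\delta_z$ (and hence the $1/N_1$ bias rate) are unchanged, argue that only the $O(1/N_0)$ variance constants are inflated, and then repeat the Taylor-expansion analysis of the ratio from Theorem~\ref{the:nestinf}. The only cosmetic difference is that you get the variance inflation via the law of total variance, whereas the paper identifies the new constants explicitly as $\sigma_z^2=\var\left[\psi(x_1,y_1,z^*)/q(x_1,y_1)\right]$ and $\sigma_g^2=\var\left[g(x_1,y_1,z^*)\psi(x_1,y_1,z^*)/q(x_1,y_1)\right]$ with $z^*\sim p_i(z|y)$.
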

\begin{proof}
The un-Rao-Blackwellized estimator for the partition function
can be represented as
\begin{align}
\hat{Z}' &= \frac{1}{N_0}\sum_{n=0}^{N_0}
	\cfrac{\psi(x_n,y_n,z_{n,m^*(n)})}{q (x_n,y_n)}
\end{align}
where
\begin{align*}
m^*(n) \sim \textsc{Discrete}\left(\cfrac{
	\pi_i (y_n,z_{n,m})/q (z_{n,m}|y_n)}
{\sum_{\ell=1}^{N_1}\pi_i (y_n,z_{n,\ell})/q (z_{n,\ell}|y_n)}\right)
\end{align*}
We first show $\hat{Z}$ from~\eqref{eq:ml} is a true Rao-Blackwellization of
$\hat{Z}'$ by noting that
\begin{align*}
&\E \left[\hat{Z}' \middle| x_{1:N_0}, y_{1:N_0}, z_{1:N_0,1:N_1}\right] \\
&= \frac{1}{N_0}\sum_{n=0}^{N_0}\sum\limits_{m=1}^{N_1}
\cfrac{
	\cfrac{\psi(x_n,y_n,z_{n,m})}{q (x_n,y_n)}
\cfrac{\pi_i (y_n,z_{n,m})}{q (z_{n,m}|y_n)}}
{\sum\limits_{m=1}^{N_1} \cfrac{\pi_i (y_n,z_{n,m})}{q (z_{n,m}|y_n)}}
= \hat{Z}.
\end{align*}
We thus see that $\hat{Z}'$ and $\hat{Z}$ have the same expectation as
required.  Equivalent arguments can further be applied to show the unnormalized
target estimate has the same expectation as before.

For the variance, we can consider that
\begin{align*}
\var &\left[\hat{Z}'\right]=\frac{1}{N_0}
\var \left[\cfrac{\psi(x_1,y_1,z_{1,m^*(n)})}{q (x_1,y_1)}\right] \\
=&\frac{1}{N_0}
\var \Bigg[\cfrac{\psi(x_1,y_1,z^*)}{q (x_1,y_1)} \\
&\quad \quad \quad -
\cfrac{\psi(x_1,y_1,z^*)-\psi(x_1,y_1,z_{1,m^*(n)})}{q (x_1,y_1)}\Bigg]
\end{align*}
where $z^*\sim p_i(z|y)$.  Now as $N_1$ increases, the second of
these terms will diminish while the first does not, meaning the
first is dominant.

By following the same steps as Theorem~\ref{the:nestinf}, we thus 
achieve the same result for the convergence rate except for substituting
in for the following definitions
\begin{align}
\sigma_z^2 &= \var \left[\cfrac{\psi(x_1,y_1,z^*)}{q (x_1,y_1)}\right]  \\
\sigma_g^2 &= \var \left[\cfrac{g(x_1,y_1,z^*)\psi(x_1,y_1,z^*)}{q (x_1,y_1)}\right].
\end{align}
Note that these variances are always larger than those from
Theorem~\ref{the:nestinf}.
\end{proof}
 
\conv*
\begin{proof}
	Let $\hat{f}_{n_o} := f_0\left(y^{(0)}_{n_0}, I_{1}\left(y^{(0)}_{n_0}, \tau_{1:D}(n_0)\right)\right) $
	%and $\tilde{f}_{n_o} := f_k\left(y^{(0)}_{n_0}, \gamma_{1}\left(y_{n_0}^{(0)}\right)\right)$ 
	and let $I_0(n_0)$
	be a NMC estimator that uses $\tau_k(n_0)$ samples at each layer. 
	%	From~\cite{rainforth2017pitfalls}
	%	we have
	%	\begin{align}
	%	\text{Var}\left[I_0(n_0)\right] &= \frac{\varsigma_0^2}{n_0} + O(\epsilon_1) 
	%	\label{eq:NMC-var}\\
	%	\begin{split}
	%	\left(\E \left[I_0(n_0)-\gamma_0\right] \right)^2 &\le O(\epsilon_2) +\\
	%	\Bigg(
	%	\frac{C_0 \varsigma_{1}^2}{2 \tau_1(n_0)}
	%	+\sum_{k=0}^{D-2} &\left(\prod_{d=0}^{k} K_{d}\right)
	%	\frac{C_{k+1} \varsigma^2_{k+2}}{2 \tau_{k+2}(n_0)}
	%	\Bigg)^2 
	%	\label{eq:NMC-bias}
	%	\end{split}
	%	\end{align}
	%	where $O(\epsilon_1)$ and $O(\epsilon_2)$ are asymptotically dominated terms.  Now considering
	%	the ONMC estimator we have
	We have
	\begin{align}
	&~\E \left[\left(J_0-\gamma_0\right)^2\right] = \text{Var}\left[J_0\right]+\left(\E \left[J_0-\gamma_0\right]\right)^2 \nonumber \displaybreak[0]\\
	\hspace{-2pt} &= \frac{1}{N_0^2}\sum_{n_0=1}^{N_0} \text{Var}\left[\hat{f}_{n_o}\right] 
	+ \left(\frac{1}{N_0}\sum_{n_0=1}^{N_0} \E \left[\hat{f}_{n_o}-\gamma_0 \right]\right)^2 
	\nonumber \displaybreak[0] \\
	%	&\le \frac{1}{N_0^2}\sum_{n_0=1}^{N_0} \text{Var}\left[\hat{f}_{n_o}\right] 
	%	+ \frac{1}{N_0}\sum_{n_0=1}^{N_0} \left( \E \left[\hat{f}_{n_o}-\gamma_0 \right]\right)^2 \nonumber \displaybreak[0] \\
	\hspace{-2pt} &=  \frac{1}{N_0^2}\sum_{n_0=1}^{N_0} \hspace{-2pt} n_0 \text{Var}\left[I_0(n_0)\right]\hspace{-2pt}
	+ \hspace{-2pt}
	\left(\frac{1}{N_0}\sum_{n_0=1}^{N_0} \E \left[I_0(n_0)\hspace{-2pt}-\hspace{-2pt}\gamma_0\right] \right)^2 \nonumber
	\end{align}
	%Substituting in~\eqref{eq:NMC-var} and~\eqref{eq:NMC-bias} now gives
	Substituting in for the variance and bias terms from~\eqref{eq:bound-cont} now gives
	%noting that the former
	%corresponds to the first term in~\eqref{eq:bound-cont} and the latter the rest, giving
	\begin{align}
	&\E \left[\left(J_0-\gamma_0\right)^2\right] \le O(\epsilon)+\frac{\varsigma_0^2}{N_0}+\label{eq:onmc-general}\\
	&\left(\frac{1}{N_0} \sum_{n_0=1}^{N_0} \left(
	\frac{C_0 \varsigma_{1}^2}{2 \tau_1(n_0)}
	+\sum_{k=0}^{D-2}  \left(\prod_{d=0}^{k} K_{d}\right)
	\frac{C_{k+1} \varsigma^2_{k+2}}{2 \tau_{k+2}(n_0)}\right)
	\right)^2\nonumber
	\end{align}
	%where $O(\epsilon)=O(\epsilon_1)+O(\epsilon_2)$ is asymptotically dominated.  
	Here $\varsigma_0^2/N_0$
	clearly tends to zero as $N_0\to\infty$.  
	For the bias squared term, which we denote $S(N_0)^2$, 
	we use the assumption that
	 $\tau_k(n_0) \ge A \left(\log(n_0)\right)^{\alpha}, \forall n_0 > B$.
	In the following analysis, we will assume that $\alpha<2$, 
	noting that if the result of the overall theorem holds for 
	$\alpha_1$, then it trivially holds for $\alpha_2>\alpha_1$.  We now have
	\begin{align*}
	& S(N_0)^2 \le \left(\frac{\lfloor B\rfloor}{N_0}S(\lfloor B\rfloor)+\frac{1}{N_0} \sum_{n_0=\lceil B\rceil}^{N_0} 
	\frac{\beta}{A \left(\log (n_0)\right)^{\alpha}}\right)^2 \\
	&\le 2 \left(\frac{\lfloor B\rfloor S(\lfloor B\rfloor)}{N_0}\right)^2 \hspace{-4pt}+ 2\left(\frac{1}{N_0}
	\sum_{n_0=\lceil B\rceil}^{N_0} \frac{\beta}{A \left(\log (n_0)\right)^{\alpha}}\right)^{2} \\
	&\le 2 \left(\frac{\lfloor B\rfloor S(\lfloor B\rfloor)}{N_0}\right)^2 \hspace{-4pt}+
	 \frac{2 \beta^2}{A^2 N_0^{\alpha}}
	\left(\sum_{n_0=\lceil B\rceil}^{N_0} \frac{1}{\left(\log (n_0)\right)^2}
	\right)^{\alpha}
	\end{align*} 
	where $\beta$ is as per~\eqref{eq:beta}.
	Here the first term clearly goes to zero because the assumption $\tau_k(n_0) \in \mathbb{N}^+$
	ensures $\lfloor B\rfloor S(\lfloor B\rfloor)$ is a finite constant.
	For the second term, we first note from using a condensation test that
	\begin{align}
	\sum_{n_0=\lceil B\rceil}^{N_0} \frac{1}{n_0\left(\log (n_0)\right)^2} < \infty.
	\end{align}
	Now by invoking Kronecker's lemma, namely that $\lim\limits_{N\to\infty} \frac{1}{N} \sum_{n}^{N} X_n =0$
	if $\sum_{n=1}^{\infty} X_n / n < \infty$, it follows that this term tends to
	zero.  Note that because we are examining the bound itself, rather than any 
	random variables, this is a result which holds surely.
	We have thus shown that all non-dominated terms in~\eqref{eq:onmc-general}
	tend to zero as $N_0\to\infty$, giving the required result.
	%	Except for the $\tau_k(n_0)$, all terms in the sum are constants, 
	%	such that $X_n = \sum_{k=1}^D a_k/\tau_k(n_0)$ for some constants $a_k$, and
	%	thus using our assumptions on $\tau_k(n_0)$,
	%	$X_n \le \beta/\log(n_0), \forall n_0>B$ for some constant $\beta$.  Therefore,
	%	noting that $X_1$ must be finite as $\tau_k(n_0) \in \mathbb{N}^+$,
	%	\begin{align}
	%	\sum_{n=1}^{\infty} \frac{X_n}{n} \le X_1 + \sum_{n=2}^{\infty} \frac{\beta}{n (\log n)^2} < \infty
	%	\end{align}
	%	because the condensation test
	%	tells us that $\sum_{n=2}^{\infty} 1/(n (\log n)^2)$ is finite. 
	%	 We have thus have that all terms
	%	in~\eqref{eq:onmc-general} converges to zero as $N_0\to\infty$, giving the required result.
\end{proof}\vspace{-8pt}

\rate*
\begin{proof}
	Starting at~\eqref{eq:onmc-general} and following on in the same manner as the proof for Theorem~\ref{the:conv},
	we have
	\begin{align*}
	S(N_0)^2 &\le \left(\frac{\lfloor B\rfloor}{N_0}S(\lfloor B\rfloor)+\frac{1}{N_0} \sum_{n_0=\lceil B\rceil}^{N_0} 
	\frac{\beta}{A n_0^{\alpha}}\right)^2 \\
	&=\left(\frac{\beta H_{\alpha}[N_0]}{A N_0}\right)^2+O(\epsilon)
	\end{align*}
	where $H_{\alpha}[N_0] := \sum_{n_0}^{N_0} n_0^{-\alpha}$ is the $N_0$-th generalized harmonic 
	number of order $\alpha$.  For $\alpha=1$ and $\alpha>1$, it is well known that
	$H_{1}[N_0] \to \log(N_0)+\eta$ and $H_{\alpha}[N_0] \to \zeta(\alpha)$ respectively.  For,
	$\alpha<1$, we apply the Euler-Maclaurin formula giving
	\begin{align*}
	H_{\alpha}[N_0] &= 1+ \int_{n_0=1}^{n_0=N_0} n_0^{-\alpha} dn_0+
	\frac{N_0^{-\alpha}-1}{2}+R_1 \\
	& \to N_0^{1-\alpha}/(1-\alpha).
	\end{align*}
	where the dominant term originates from the integral.
	Putting everything back together, namely substituting in turn for the bound on $S(N_0)^2$ and then
	this bound into~\eqref{eq:onmc-general}, now yields the desired result.
	%	The result follows straightforwardly from 
	%	combining our polynomial bound on the $\tau_k$ and~\eqref{eq:onmc-general}, then
	%	substituting $H_{2\alpha}[N_0]$ in for the resulting
	%	$\sum_{n_0}^{N_0} n_0^{-2\alpha}$ term.
\end{proof}\vspace{-8pt}

\relative*
\begin{proof}
	We first consider how to match the sample budgets between the two estimators.  Noting that 
	asymptotically, the computational cost is dominated by calculations for the innermost
	estimator (see~\citep[Appendix G]{rainforth2017thesis}), we have for large $N_0$,
	\begin{align*}
	\text{Cost}_{\text{ONMC}} &\to \sum_{n_0}^{N_0} \prod_{k=1}^{D} \tau_k (n_0)
	= A^D \sum_{n_0}^{N_0} n_0^{\alpha D} \\
	&= A^D H_{-\alpha D} [N_0].
	\end{align*}
	The respective asymptotic cost for an NMC using $M_0$ outermost samples is
	\begin{align*}
	\text{Cost}_{\text{NMC}} &\to A^D M_0^{1+\alpha D}.
	\end{align*}
	Thus matching the computational budgets gives
	\begin{align}
	M_0 = \left(H_{-\alpha D} [N_0]\right)^{\frac{1}{1+\alpha D}}.
	\end{align}
	Now by applying the Euler-Maclaurin formula to $H_{-\alpha D} [N_0]$
	in similar manner to Theorem~\ref{the:rate}, we get
	\begin{align*}
	H_{-\alpha D} [N_0] \to \frac{N_0^{1+\alpha D}}{1+\alpha D}, \quad \text{and thus}
	\quad M_0 \to c N_0.
	\end{align*}
	Using~\eqref{eq:bound-cont} and Theorem~\ref{the:rate} we thus have	
	\begin{align*}
	\text{Var}\left[J_0\right] &\to \varsigma_0^2 / N_0 \to c \varsigma_0^2 / M_0 \to c\text{Var}\left[I_0\right].
	\end{align*}
	
	Now considering the biases, 
	\begin{align*}
	\left|\E [J_0-\gamma_0]\right| &= 
	\left|\frac{1}{N_0}\sum_{n_0=1}^{N_0} \E \left[I_0(n_0)\hspace{-2pt}-\hspace{-2pt}\gamma_0\right] \right| \\
	&\le \frac{1}{N_0}\sum_{n_0=1}^{N_0} \left| \E \left[I_0(n_0)\hspace{-2pt}-\hspace{-2pt}\gamma_0\right] \right| \\
	\intertext{and whenever~\eqref{eq:bias-assumpt} holds,}
	&=\frac{1}{N_0}\sum_{n_0=1}^{N_0} \frac{b}{n_0^{\alpha}} +O(\epsilon)	\\
	&=\frac{b H_{\alpha}[N_0]}{N_0} +O(\epsilon) \\
	&=\frac{b g(\alpha,N_0)}{N_0^{\alpha}} +O(\epsilon).
	\end{align*}
	By comparison,~\eqref{eq:bias-assumpt} also gives us
	\begin{align*}
	\left|\E [I_0-\gamma_0]\right| = \frac{b}{c^{\alpha} N_0^{\alpha}} +O(\epsilon)
	\end{align*}
	and so
	\begin{align*}
	\left|\E [J_0-\gamma_0]\right| &\le c^{\alpha} g(\alpha,N_0)\left|\E [I_0-\gamma_0]\right| +O(\epsilon)
	\end{align*}
	as required.
\end{proof}
% !TEX root =  appendix.tex

\section{OBSERVING THE OUTPUT OF A NESTED QUERY}
\label{sec:app:nest-obv}

As discussed in Section~\ref{sec:samp}, one can construct nested inference problems
where one observes the output of, rather than sampling from, the nested query.  For
example, we could think about adjusting our previous example to the following
\vspace{-4pt}
\begin{lstlisting}[basicstyle=\ttfamily\small,frame=none]
(defquery inner [y D]
 (let [z (sample (gamma y 1))]
  (observe (normal y z) D)
  z))

(defquery outer [D]
 (let [y (sample (beta 2 3))
       x (sample (gamma 1 1))
       dist (conditional inner)]
  (observe (dist y D) x)
  (* y x)))
\end{lstlisting}
\vspace{-8pt}
Statistically, this problem is still well defined and can be represented
in the same form as~\eqref{eq:nest-inf}; Anglican's \sample and \observe
have the same impact on the distribution defined by a program, varying only in
whether the variable already exists or not~\citep{rainforth2016bayesian}.

However, in general we are not able to evaluate even the unnormalized density
of a program's outputs due to change-of-variables complications~\citep[Chapter 4]{rainforth2017thesis}.
This creates an ABC-style problem~\citep{csillery2010approximate}, wherein we can
generate weighted samples from the inner query, but we cannot evaluate its density
for a given output.  This creates a substantial computational issue for actually
observing a nested query that must be dealt with on top of any complications from
the nested estimation.  Dealing with these is beyond the scope of this paper and is
left to future work.
% !TEX root =  main.tex

\section{DISCRETE OR DETERMINISTIC INPUT VARIABLES}
\label{sec:special}

One special case where consistency can be maintained without requiring infinite computation for each nested call
is when the variables passed to the inner query can only take on, say $C$, finite
possible values. Of particular note, is the case when only deterministic variables are passed to the inner
query, corresponding to $C=1$, which, for example, forms the theoretical basis for the ``programs as proposals'' approach 
of~\cite{cusumano2018using}.
As per Theorem 5 of~\cite{rainforth2017pitfalls}, we can rearrange such problems to
$C$ separate estimators such that the standard Monte Carlo error rate can be achieved.
This is perhaps easiest to see by noting that for such problems, $\int \pi_i(y,z)dz$ can
only on $C$ distinct values, leading to a separate, non nested, inference problem through enumeration.
For repeated nesting, the rearrangement can be recursively applied until one achieves
a complete set of non-nested estimators.  To avoid inferior NMC convergence rates, this special case requires 
explicit rearrangement
or a specialist consideration by the language back-end (as done by e.g. \cite{stuhlmuller2012dynamic,stuhlmuller2014reasoning,cornish2017efficient}).  For example, one can
dynamically catch the inner query being called with the same inputs, e.g. using memoization, and then exploit the fact that all
such cases target the same inference problem.  Care is required in these approaches to ensure the correct
combination with outer query, e.g. returning properly weighted samples and ensuring the budget of the
inner queries remains fixed.

\section{EXACT SAMPLING}
\label{sec:exact}
%
%\footnote{Interestingly, as discussed in Section~\ref{sec:exact}, it might actually
%	be possible to use exact sampling methods, such as rejection sampling, to avoid this bias without requiring
%	infinite computation for each nested call.  It might also be possible to
%	use debiasing techniques to remove the particular bias that is generated, see e.g.~\cite{glynn2014exact,jacob2017smoothing}.}
It may, in fact, be possible to provide consistent estimates for many nested query problems without requiring infinite computation
for each nested call by using exact sampling methods such as rejection sampling or coupled Markov chains~\citep{propp1996exact}.
Such an approach is taken by Church~\citep{goodman2008church}, wherein 
no sample ever returns until it passes its 
local acceptance criterion as a hierarchical rejection sampler.  Church is able to do this because it only supports hard conditioning on events with finite probability, allowing it to take a guess-and-check process that produces an exact sample
in finite time, simply sampling from the generative model until the condition is satisfied.
Although the performance still clearly
gets exponentially worse with nesting depth, this is a change in the constant factor of the
computation, not its scaling with the number of samples taken:
generating a single exact sample of the distribution has a finite expected time using rejection sampling which is thus
a constant factor in the convergence rate.

Unfortunately, most problems require conditioning on measure zero events because they include continuous data
-- they require a soft conditioning akin to the inclusion of a likelihood term -- and so cannot be tackled using Church.
Constructing a practical generic exact sampler for soft conditioning in an automated way is likely to be insurmountably problematic
in practice. Nonetheless, it does open up the intriguing prospect
of a hypothetical system that provides a standard Monte Carlo convergence rate for nested inference.
This assertion is a somewhat startling result: it suggests that Monte Carlo 
estimates made using nested exact sampling methods have a fundamentally different
convergence rate for nested inference problems (though not nested estimation problems in general) than, say,
nested self-normalized importance sampling.
%Amongst other things, this might lead, 
%perhaps through combination with 
%the work of~\cite{moller2006efficient}, to a contradiction of the, ostensibly reasonable, conjecture
%made by~\cite{murray2004bayesian} that there are no generic tractable MCMC schemes for doubly-intractable distributions.
%We leave formal analysis of this tantalizing line of inquiry to future work.
%It is beyond the scope of the work to analyze this hypothesis more formally, or the rabbit hole of
%nested exact sampling methods more generally, but we highlight that this constitutes a potentially fascinating line of inquiry.
% !TEX root =  appendix.tex

\section{CASE STUDY: SIMULATING A POKER PLAYER}
\label{sec:poker}

As a more realistic demonstration of the utility for allowing nested inference
in probabilistic programs, we consider the example of simulating a poker player who reasons about
another player; we
will refer to the two players respectively as P1 and P2.
 Anglican code for this example is given in Figure~\ref{fig:poker-code}.  Though the
model has been kept deliberately simple for exposition, one could easily envisage adapting it to a
higher fidelity simulation.  In particular, one could easily adapt the model to consider multiple players, 
additional betting options for the second player,
and multiple rounds of betting (for which addition levels to the nesting might be required).

\begin{figure*}[t!]
	\begin{lstlisting}[basicstyle=\ttfamily\footnotesize,frame=none]
(defdist hand-strength []
 ;; Samples the strength of a hand
 [dist (uniform-continuous 0 1)]
 (sample* [this] (sample* dist))
 (observe* [this value] (observe* dist value)))

(defdist p1-bet-dist [hand]
 ;; Likelihood model used by player 2 to infer the strength of player
 ;; 1's hand
 [mean-bet (if (< hand 0.5) 0 (* 8 hand))]
 (sample* [this] nil) ;; No need to support sampling here
 (observe* [this value] 
   (log-sum-exp 
    (+ (log 0.95) (observe* (normal mean-bet 2) value))
    (+ (log 0.05) (observe* (uniform-continuous 4 10) value)))))

(with-primitive-procedures [hand-strength p1-bet-dist]
 (defm calc-payoff [p1-hand p1-bet p2-hand p2-call]
  ;; Calculate payoff given actions and hands.
  (let [small-blind 1 
        big-blind 2]
   (case (< p1-bet big-blind)
     true (- small-blind) ;; Lose small blind if fold
     false (case p2-call
             false big-blind ;; Pick up big blinds
             true (if (> p2-hand p1-hand);; Showdown
                    (- p1-bet) 
                    p1-bet)))))

 (defquery p2-sim [p2-hand p1-bet]
  ;; Simulator for player 2 who knows player 1's bet but not her
  ;; hand.  Returns boolean of whether bet is called
  (let [p1-hand (sample (hand-strength))] ;; Simulate a hand for player 1
   (observe (p1-bet-dist p1-hand) p1-bet) ;; Condition on player 1's known bet
   (> p2-hand p1-hand)))

 (defquery p1-payoff [p1-hand p1-bet N_1]
  ;; Estimator for distribution of player 1's payoff for given hand and action
  (let [p2-hand (sample (hand-strength)) ;; Sample hand for opponent
        dist (conditional p2-sim :smc :number-of-particles N_1)
        p2-call (sample (dist p2-hand p1-bet))] ;; Simulate player 2
   (calc-payoff p1-hand p1-bet p2-hand p2-call)))) ;; Return payoff

(defn estimate-payoff [p1-hand p1-bet N_0 N_1]
 ;; Estimates the relative probability of actions given a hand
 (let [samps (->> (doquery :importance p1-payoff [p1-hand p1-bet N_1])
                  (take N_0))]
  (empirical-distribution (collect-results samps))))
	\end{lstlisting}
	\vspace{-10pt}
	\caption{Code simulating the behavior of a poker player who reasons about the behavior of another player.
		Explanation provided in text.\label{fig:poker-code}}
\end{figure*}

%\begin{figure*}[t!]
%	\centering
%	\begin{subfigure}[b]{0.49\textwidth}
%		\centering
%		\includegraphics[width=\textwidth,trim={1.5cm 0 3.5cm 0},clip]{poker1}
%		\caption{Probability player 1 betting \label{fig:poker1}}
%	\end{subfigure}
%	\begin{subfigure}[b]{0.49\textwidth}
%		\centering
%		\includegraphics[width=\textwidth,trim={1.5cm 0 3.5cm 0},clip]{poker2}
%		\caption{Probability player 2 betting\label{fig:poker2}}
%	\end{subfigure}
%	\caption{Probabilities of each player betting output by \clj{p1-sim} for different computational budgets of the inner
%		estimator.  The ostensibly strange behavior that P2 is more likely to bet as the strength of P1 hand's
%		increases is because \clj{p1-sim} conditions on having a good return for P1 and when P1 has a good hand,
%		it becomes increasingly beneficial for that bet to be called.  In other words, (b) is explicitly not the 
%		marginal distribution for optimal betting of P2 which would require a separate calculation.  
%		\label{fig:poker-line}}
%\end{figure*}

\begin{figure*}[t!]
	\centering
	\begin{subfigure}[b]{0.49\textwidth}
		\centering
		\includegraphics[width=\textwidth,trim={6cm 0 4cm 2cm},clip]{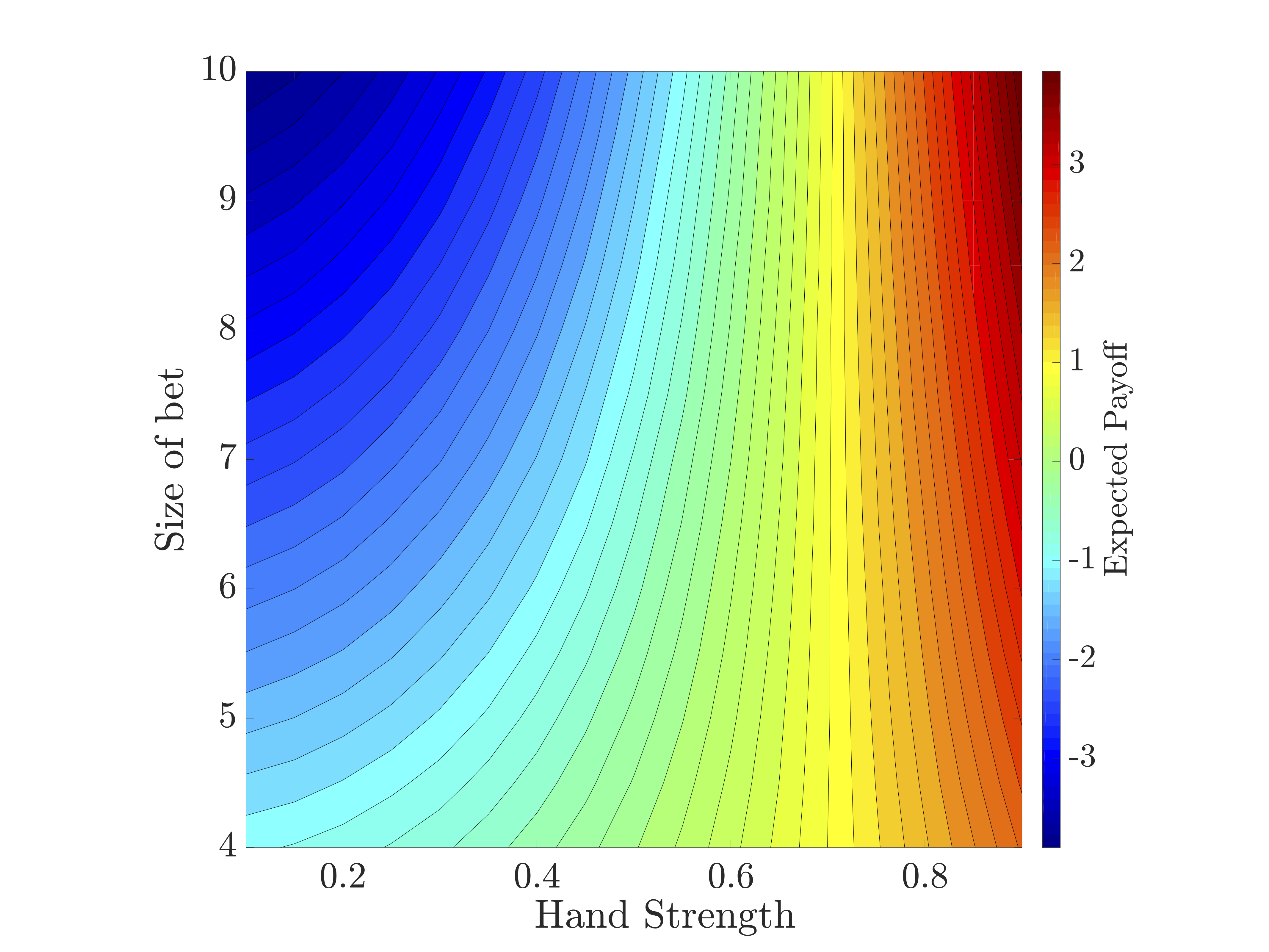}
		\caption{Expected payoff with $N_1=1$ \label{fig:poker1}}
	\end{subfigure}
	\begin{subfigure}[b]{0.49\textwidth}
		\centering
		\includegraphics[width=\textwidth,trim={6cm 0 4cm 2cm},clip]{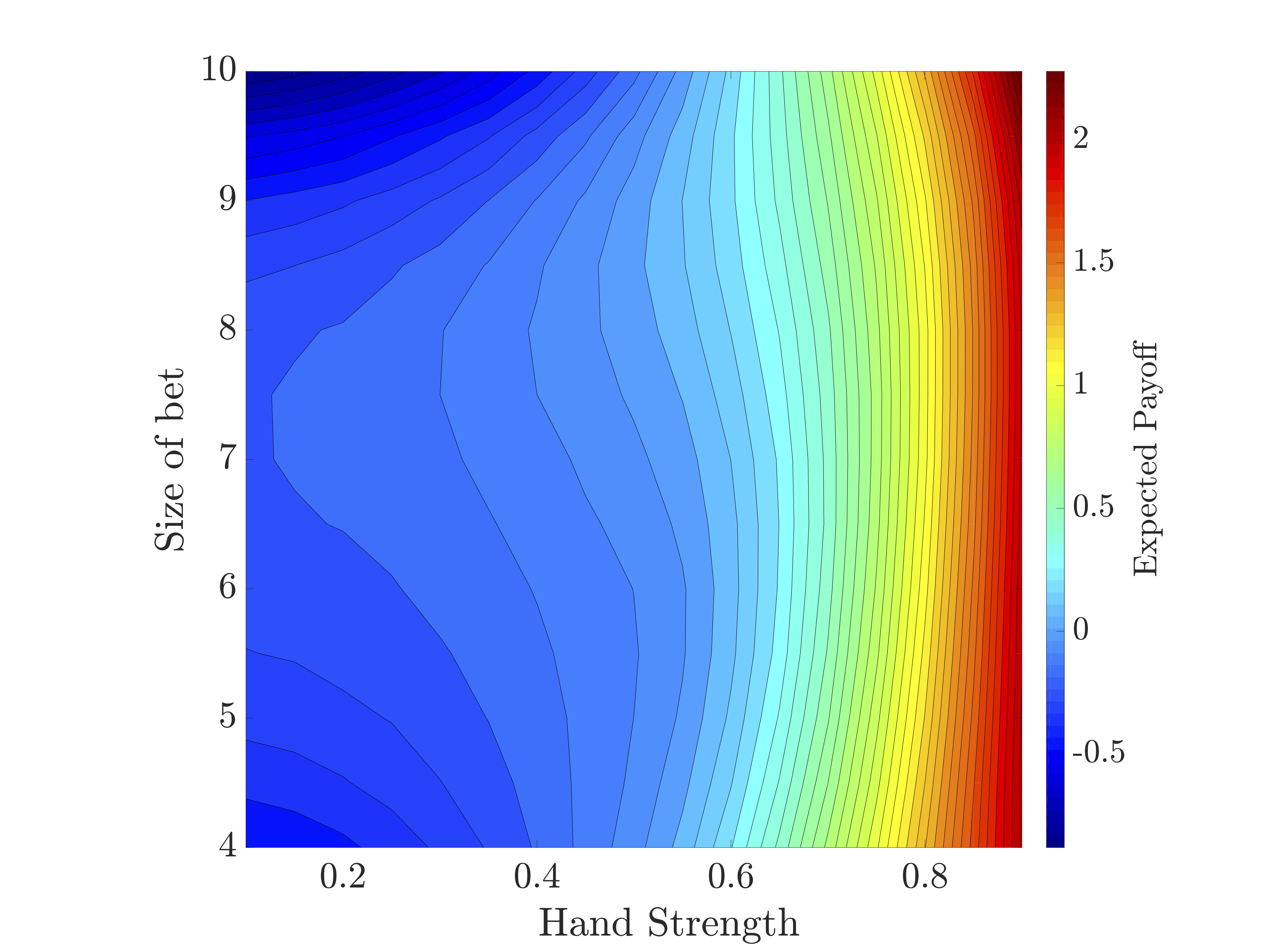}
		\caption{Expected payoff with $N_1=1500$ \label{fig:poker2}}
	\end{subfigure}
	\caption{Contour plots for P1's expected payoff using the poker model given in Figure~\ref{fig:poker-code}
		as a function of  their hand strength and amount bet (in $\pounds$).  On the left is the na\"{i}ve estimator using $N_1=1$,
		which is an equivalent to ignoring the \observe statement in \clj{p2-sim}, such P2 bets when their
		hand is better than one drawn uniformly at random.  On the right is the output of produced by
		using the empirical measured given in~\eqref{eq:emp-measure} based on self-normalized, nested
		importance sampling, with $N_1=1500$.  For both models, an evenly spaced $17\times13$ grid (hand strength
		by bet size) of estimates was calculated using $N_0=2\times10^6$ outer samples, which was
		in turn converted to the shown contour plots using \textsc{matlab}'s \clj{contourf} plot function.
		Note the difference colorbar scaling between the plots.
		\label{fig:poker-line}}
\end{figure*}

At a high level, we a trying to estimate the distribution of payoffs (i.e. the net money received)
by P1 for different hands and bets.  This can then in turn be used to, for example,
optimize the bet made.  The starting situation is that P1 is on the small blind ($\pounds1$) and P2 on the big
blind ($\pounds2$), with no other players currently in the game.  This means that P1 and P2 have already have 
committed (as required by the rules of the game) $\pounds1$ and $\pounds2$ respectively to the pot and it is P1's
turn to act.  P1 can now choose between three actions
\begin{itemize}
\item[Fold] -- P1 declines to commit any more money.  P2 takes the pot giving P1 a payoff of $-\pounds1$.
\item[Call] -- P1 matches the stake from the big blind.  For simplicity, we are
ignoring further rounds of betting and the scenario where P2 makes a further bet. There will, therefore, be a
showdown where the better hand takes the pot.  Here P1's payoff is $+\pounds2$ if they
transpire to have the better hand and $-\pounds2$ otherwise.
\item[Bet] -- P1 increases their stake to between twice the big blind (i.e. $\pounds4$)
and the maximum allow bet size (which we take to be $\pounds10$).  P2 then themselves subsequently
decides whether they will call this bet or fold.  If they fold, P1 receives a payoff of $+\pounds2$.
If they call, a showdown occurs as before, except that the win/lose payoffs are now $\pm$ the size
of P1's bet.
\end{itemize}
Estimating the payoff distributions for the cases where P1 folds or calls is straightforward.
Folding always yields a payoff of $-\pounds1$. Calling yields $+\pounds2$ with probability
equal to the probability that P1's hand is better than a randomly generated hand and $-\pounds2$
otherwise.  Thus if we represent hand strength as a uniform distribution between $0$ and $1$, the expected
payoff of calling when P1 has hand strength $h_1$ becomes simply $2h_1 - 2 (1-h_1)=2(2 h_1-1)$.
Consequently, the expected payoff of calling is better than that of folding if and only if $h_1>0.25$

If P1 instead decides to bet, estimating the payoff distribution becomes substantially more complicated
as it no longer depends only on the respective strength of the two hands, but also the action P2 takes.
This action will be influenced not only by P2's hand, but also the size of P1's bet: P2 can draw inferences
about likely hands for P1 using the information conveyed in P1's bet.  To reflect this, our model for P2,
\clj{p2-sim}, uses a likelihood function for P1's betting, \clj{p1-bet-dist}, to condition on the actual bet made.
This likelihood is based on the, slightly na\"{i}ve, sentiment that P1 will bet more with a better hand,
while also allowing provision for P1 generating their bet at random as a bluff.
P2 decides to call P1's bet if their hand is better than the hand they simulate for P1.  Thus
denoting $c_2$ as the boolean variable indicate if P2 calls then we have
 $P(\{c_2=1\})
= P(h_2 > h_1 | b_1) = \E [h_2>h_1 | b_1]$ where $b_1$ represents P1's bet.\footnote{In practice, 
	it may be more realistic to assume that, rather than aiming to call in proportion to $P(h_2 > h_1 | b_1)$,
	P2 instead tries to directly estimate this probability and deterministically chooses to call
	if this estimate some threshold determined by the pot odds.  This would then lead to an "estimates as variables"
	nested estimation, instead of a nested inference model.}
Note that, from P2's perspective, $h_2$ and $b_1$ are known, but $h_1$ is a random variable.

To estimate the payoff when P1 bets, we nest this model for P2.  Specifically,
the payoff for P1 is given by $\E [\textsc{Payoff}(h_1,b_1,h_2,c_2)]$ where $h_1$ and $b_1$
are fixed, $h_2$ is drawn uniformly at random, and $c_2 | h_2$ is sampled using a nested inference on \clj{p2-sim}.

Keen-eyed readers may have noticed that the use of \conditional in \clj{p1-payoff} is distinct to elsewhere in
the paper as we have explicitly used SMC inference with a provided number of particles $N_1$ for \conditional.  This provides a
roundabout means of controlling the computational budget for calls to \conditional, as we showed is required for convergence in
Section~\ref{sec:samp}.  

Figure~\ref{fig:poker-line} shows contour plots for P1's expected payoff as a function of their hand
strength and amount bet when P2 na\"{i}vely simulates P1's hand strength from the prior (left)
and uses inference to try and infer P1's hand strength from their bet (right).  As expected, for the
na\"{i}ve model then it is better for P1 to make larger bets when she has a strong hand and smaller
bets when she has a weak hand.  When she has a weak hand, the expected payoff of all possible bets is
worse than folding or calling.  

In our nested model, a number of more complex behaviors arise.  Firstly,
we note that the overall variation in expected payoff is less: making significant bets with a weak hand becomes
less detrimental, while the expected rewards of a large bet with a strong hand are also diminished.
This occurs because the act of betting portrays a stronger hand and so P2 is more likely to fold
when they condition their assessment of P1's hand on the fact that P1 bet.  
Consequently, a bluff with a weak hand is more likely to steal the blinds, while a bet with a strong hand
is less likely to get paid off by a call.  In fact, we see that, for this model, it is beneficial to take
a hyper-aggressive stance and always bet: P2 is sufficiently passive that the risk of betting is always worthwhile
even for a very weak hand.

Another, more subtle, effect that transpires is that, when P1 has a weak hand, it is possible to both
bet too much and bet too little.  Too small a bet is more likely to
get called -- even when P2 has a weak hand, they are being offered very favorable odds to call the
bet in hope that P1 is bluffing.  Too large a bet exposes P1 to unnecessarily large losses when P2
transpires to have a strong hand and decides to call.  A medium sized bluff thus offers the best balance
between being believable and not being unnecessarily risky.
A different effect is seen when P1 has a strong hand: small bets are likely to get paid-off by
a large number of hands, while large bets may yield large rewards or potentially cause an even stronger
hand to fold.  Thus a mid-level bet actually becomes the worst option.

For this problem, nesting has allowed us to emulate a player that assumes simplistic play from
their opponent to outsmart them.  One could clearly envisage making the model even smarter by adding
additional layers of nesting.  Suppose that P2 is actually a good player that more explicitly
reasons about the fact that P1 will be reasoning about them.  We could then, for example, use
the model developed so far for P2's simulation of P1, meaning that they will be more attuned to the fact that
P1 might be bluffing.  Amongst other things, this
is then likely to make them more likely to call, knowing that P1 is playing an aggressive game and they have
 a good chance of catching a bluff.  P1
could then in turn use this a higher fidelity model for P2, replacing the current \clj{p-sim}.  It is
easy to see how such a meta-reasoning hierarchy could potentially lead to smarter and smarter players.  However, the NMC 
converges rates tell us that doing so comes at a substantial cost in terms of the difficultly
of solving the resulting nested estimation problem: the required number of samples increases
exponentially with the depth of the nesting.

\begin{figure}[t]
	\centering
	{\includegraphics[width=0.95\textwidth,trim={0.5cm 0 4cm 1cm},clip]{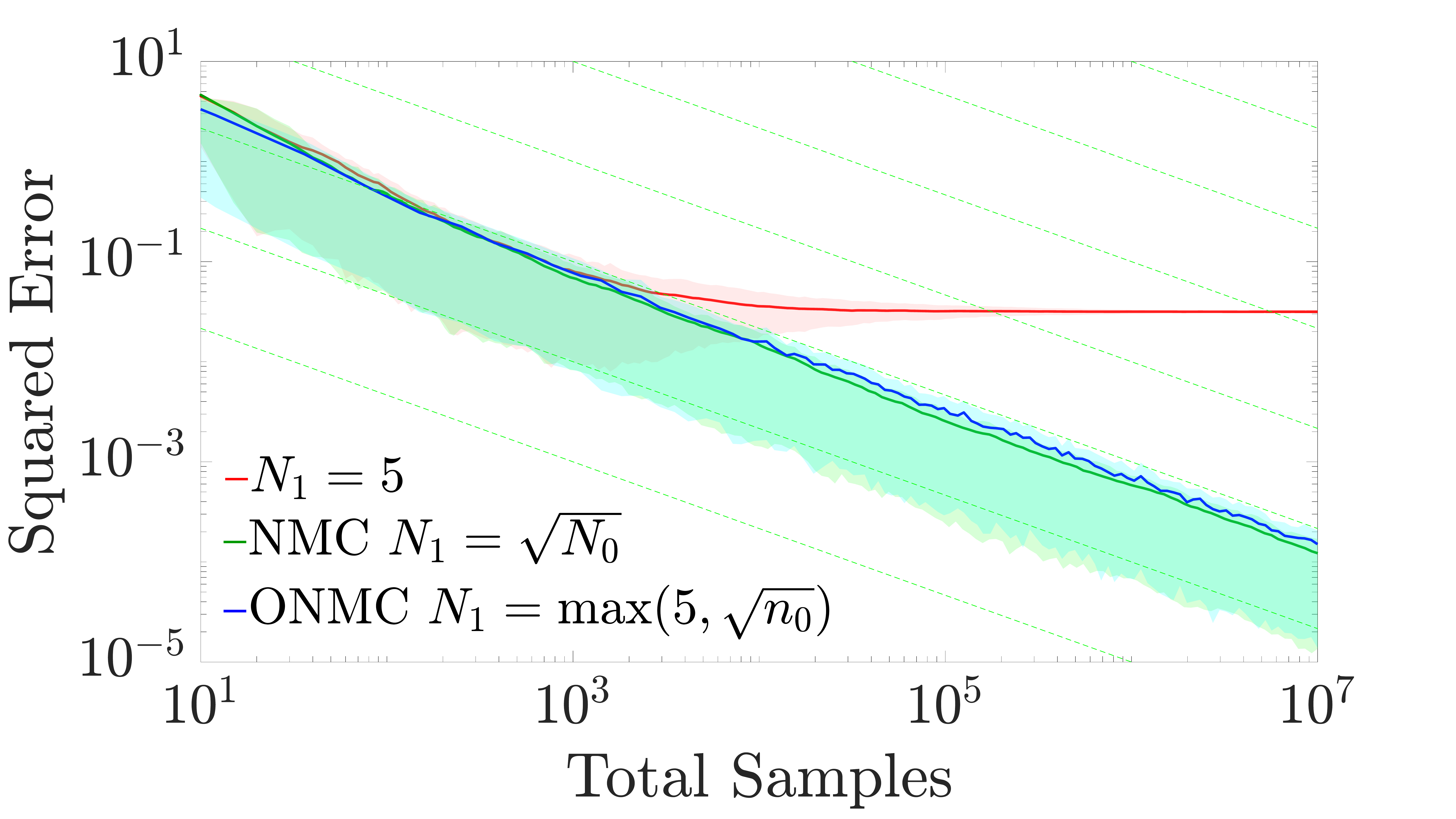}}
	\vspace{-8pt}
	\caption{Convergence of ONMC, NMC, and fixed $N_1$ for expected payoff in poker example
		with hand strength set to $0.1$ and bet size set to $6$.  Results are averaged over $1000$ runs, with 
		solid lines showing the mean and shading the 25-75\% quantiles.  
		Ground truth was estimated empirically using a large scale NMC run
		with $N_0 = 5 \times 10^7$ and $N_1 = 5000$.
		The theoretical	rates for NMC are shown by the dashed lines.
		\label{fig:onmc-poker}		\vspace{-16pt}}
\end{figure}

% !TEX root =  main.tex
%
%\begin{figure}[t!]
%	\centering
%	\includegraphics[width=\textwidth,trim={1.5cm 0 3.5cm 3cm},clip]{exp_conv2}
%	%	\vspace{-5pt}
%	\caption{Convergence of NMC (i.e. code shown in Figure~\ref{fig:nest:exp}) and a reformulated standard Monte Carlo
%		estimator for a Bayesian experimental design problem.
%		Results are averaged over 1000 independent runs, while shaded regions give the 25\%-75\% quantiles, with a
%		ground truth estimate made using a single run of the reformulated
%		estimator with $10^{10}$ samples. We
%		see that the theoretical convergence rates are observed (as indicated by the dashed lines),
%		with the advantages of the reformulated estimator particularly pronounced.\vspace{-14pt}\label{fig:exp-conv}}
%\end{figure}

\begin{figure*}[t!]
	\centering		
	\rule{\linewidth}{0.4pt}
	\vspace{-22pt}		
	\begin{lstlisting}[basicstyle=\ttfamily\footnotesize,multicols=2,frame=none]
(defm prior [] (normal 0 1))
(defm lik [theta d] (normal theta d))
	
(defquery inner-q [y d]
 (let [theta (sample (prior))]
  (observe (lik theta d) y)))
	
(defn inner-E [y d M]
 (->> (doquery :importance 
        inner-q [y d])
      (take M)
      log-marginal))
	
(with-primitive-procedures [inner-E]
 (defquery outer-q [d M]
  (let [theta (sample (prior))
        y (sample (lik theta d))
        log-lik (observe* 
                 (lik theta d) y)
        log-marg (inner-E y d M)]
   (- log-lik log-marg))))
	
(defn outer-E [d M N]
 (->> (doquery :importance 
        outer-q [d M])
      (take N)
      collect-results
      empirical-mean))
	\end{lstlisting}
	\vspace{-14pt}		
	\rule{\linewidth}{0.4pt}
	%\vspace{5pt}
	\caption{Anglican code for Bayesian experimental design.
%	    corresponding to the estimator
%		given in~\eqref{eq:exp-des-nmc}.  
		By changing the definitions of \clj{prior}
		and \clj{lik}, this code can be used as a
		NMC estimator (consistent as $N,M\rightarrow\infty$) for any static
		Bayesian experimental design problem.
		Here \lsi{observe*} is a function for returning the log likelihood (it does not
		affect the trace probability), \lsi{log-marginal} produces a partition function estimate
		from a collection of weighted samples,
		and \lsi{->>} successively applies a series of functions calls,
		using the result of one as the last input the next.  When \lsi{outer-E} is invoked,
		this runs importance sampling on \lsi{outer-q}, which, in addition to carrying out
		its own computation, calls \lsi{inner-E}.  This, in turn, invokes another inference over
		\lsi{inner-q}, such that a \mc estimate using \lsi{M} samples is constructed for
		each sample of \lsi{outer-q}.  Thus \lsi{log-marg} is \mc estimate itself.
		The final return is the (weighted) empirical mean for
		the outputs of \lsi{outer-q}. 
		\label{fig:nest:exp}}	
\end{figure*}

We finish by comparing the empirical performance of ONMC and NMC for this particular problem.
Here we consider a fixed bet of $\pounds 6$ and hand strength of $0.1$.  The convergence, shown
in Figure~\ref{fig:onmc-poker}, demonstrates extremely similar performance for the two approaches,
while again highlighting the danger of keeping $N_1$ fixed.  Note that the slightly different
setup used for $\tau$ to that used in the Gaussian example does not make any noticeable difference
to the performance (not shown), with the different choice stemming from a desire to better highlight the problem
of keeping $N_1$ fixed.

\section{SIMPLE ANALYTICAL MODEL DETAILS}
\label{sec:simple-model}

We consider the following simple analytic model introduced by~\citep{rainforth2017pitfalls}
for which the true nested expectation is
$\gamma_0 = \frac{1}{2}\log\left(\frac{2}{5\pi}\right)-\frac{2}{15}$
\begin{subequations}
	\label{eq:simple-model}
	\begin{align}
	&y^{(0)} \sim \mathrm{Uniform}(-1,1), \\
	&y^{(1)}  \sim \mathcal{N}(0,1), \\
	&f_1(y^{(0)},y^{(1)}) = \sqrt{\frac{2}{\pi}}\exp\left(-2(y^{(0)}-y^{(1)})^2\right), \\
	&f_0(y^{(0)},\gamma_1(y^{(0)})) = \log (\gamma_1 (y^{(0)})).
	\end{align}
\end{subequations}
Results for this model are shown in Figure~\ref{fig:onmc} in the main paper.

\section{EXPERIMENTAL DESIGN EXAMPLE}
\label{sec:exp}

An example application of using estimates as first class variables if provided by
Bayesian experimental design~\citep{chaloner1995bayesian}.
One can implicitly use expectation estimates as first class variables in Anglican 
by either calling \doquery inside a \defdist declaration or in a \defn function
passed to a query using \clj{with-primitive-procedures}, a macro providing
the appropriate wrappings to convert a Clojure function to an Anglican one.
Anglican code using the latter approach to create generic estimator for 
Bayesian experimental design problems is shown in
Figure~\ref{fig:nest:exp}, providing a consistent means of carrying out this class of nested estimation problems.  
\cite[Figure 6]{rainforth2017pitfalls} shows the convergence code equivalent to that of 
Figure~\ref{fig:nest:exp} for a delay discounting model.
This shows the convergence (or more specifically lack there of) in the case where $M=N_1$ is held fixed
and the superior convergence achieved when exploiting the finite number of possible outputs to
produce a reformulated, standard Monte Carlo, estimator. 
It therefore highlights both the importance
of increasing the number of samples used by the inner query and exploiting our outlined special cases
when possible.

\section*{Acknowledgements}
%\vspace{-5pt}
I would like to
thank Yee Whye Teh, N. Siddharth, and Benjamin Bloem-Reddy
for feedback on drafts of this work.
My research leading to these results has received funding from the
European Research Council under the European Union's Seventh Framework
Programme (FP7/2007-2013) ERC grant agreement no. 617071.  Some of the work was 
undertaken while I was at the Department of Engineering Science and was supported 
by a BP industrial grant.

\newpage

\bibliography{refs}

\end{document}